\documentclass[journal]{IEEEtran}
\IEEEoverridecommandlockouts
\ifCLASSINFOpdf
\else
\fi
\usepackage{multirow}
\usepackage{pdflscape}
\usepackage{amsmath,amssymb,amsfonts}
\usepackage{graphicx}
\usepackage{xcolor}
\usepackage{subcaption}
\usepackage{mathtools}
\usepackage{cite}
\usepackage{graphicx}
\usepackage{array}
\usepackage{mathdots} 
\usepackage{amsthm}
\theoremstyle{plain}

\newtheorem{theorem}{Theorem}
\usepackage{float}

\def\BibTeX{{\rm B\kern-.05em{\sc i\kern-.025em b}\kern-.08em
    T\kern-.1667em\lower.7ex\hbox{E}\kern-.125emX}}

\begin{document}

\title{Hankel and Toeplitz Rank-1 Decomposition of Arbitrary Matrices with Applications to Signal Direction-of-Arrival Estimation}

\author{Georgios I. Orfanidis,~\IEEEmembership{Student Member~IEEE}, ~Dimitris A. Pados,~\IEEEmembership{Senior Member~IEEE}, \\George Sklivanitis,~\IEEEmembership{Member~IEEE},~and Elizabeth Serena Bentley,~\IEEEmembership{Senior Member~IEEE}
\thanks{G. I. Orfanidis, D. A. Pados, and G. Sklivanitis are with the Center for Connected Autonomy and AI and the Dept. of Electrical Engineering and Computer Science, Florida Atlantic University, Boca Raton, FL 33431 USA (email: \{gorfanidis2021, dpados, gsklivanitis\}@fau.edu).

E. S. Bentley is with the Air Force Research Laboratory, AFRL/RI, Rome, NY 13441 USA (e-mail: \{elizabeth.bentley.3\}@us.af.mil).

This work was supported in part by the National Science Foundation under Grants CNS-2117822 and EEC-2133516 and the Air Force Research Laboratory under Grant FA8750-25-1-1000.

Distribution A. Approved for public release: Distribution Unlimited: AFRL-2026-2349 on 12 May 2026.
}}
\maketitle

\begin{abstract}
We consider the problems of computing the optimal rank-$1$ Hankel and Toeplitz-structured approximation of arbitrary matrices under $L_2$ and $L_1$-norm error. Such problems arise naturally in engineered systems, including the basic few-shot signal Direction-of-Arrival (DoA) estimation problem that is of importance to modern autonomous systems applications.
We develop accurate and computationally efficient structured matrix decomposition algorithms for both formulations and then derive analytically grounded small-sample-support DoA estimators for practical sensing system deployments. The resulting estimators under the $L_2$ and $L_1$ norms are formally shown to be maximum-likelihood optimal under white Gaussian and Laplace noise, respectively. The estimators are further validated through extensive simulation studies and real-world data experiments in few-shot DoA inference.
\end{abstract}

\begin{IEEEkeywords} 
$L_1$-norm, $L_2$-norm, Hankel, Toeplitz, structured low-rank decomposition, direction of arrival (DoA) estimation, maximum likelihood estimation, robust estimation,  small sample support.
\end{IEEEkeywords}

\section{Introduction}
\IEEEPARstart {E}{stimating} and modeling data from noisy values, while balancing estimation accuracy and model complexity, has been at the forefront of signal processing and machine learning research for decades. A fundamental algebraic approach to this problem is low-rank approximation (LRA) where a data matrix constructed from the noisy measurements is approximated by another matrix of low rank. At its core, LRA seeks to recover the underlying subspace that best represents the noise-free data with the matrix rank serving as a direct measure of model complexity. Depending on the assumed underlying model, low-rank approximation (LRA) can be categorized into unstructured low-rank approximation (ULRA), when the model is assumed to be linear and static, and structured low-rank approximation (SLRA), when the model can be nonlinear or dynamic \cite{markovsky2012low}.

Conventionally, ULRA seeks the rank-$k$ matrix that is closest to the data under the $L_2$-norm error criterion. This problem admits a numerical solution via truncated singular value decomposition (SVD) \cite{eckart1936approximation} and is equivalent to the Principal Component Analysis (PCA) problem of maximum $L_2$-norm data projection with as many principal components as the desired rank \cite{golub1996matrix}. 
$L_2$-norm ULRA has played a significant role across a broad spectrum of application domains in signal processing, information retrieval, computer vision, natural language processing, time-series analysis, and signal denoising and classification \cite{markovsky2012low,pearson1901liii,tenenbaum2000global,belkin2003laplacian,van2002optimum}
and  is appealing for its analytical simplicity, computational efficiency, and scalability (new principal directions naturally extend from existing ones). Nonetheless, it is well recognized that the quadratic formulation is highly sensitive to sporadic faulty observations, since the squared residuals assign disproportionate influence to outliers in the data matrix. Formally, outliers refer to data points that significantly deviate from the nominal data subspace and are unlikely to occur under normal system operation. They can infiltrate the data matrix for reasons including transient sensor malfunctions, erroneous data recording, or external interference. In an effort to make $L_2$-norm ULRA more resilient to such outliers, the research community explored alternative formulations based for example on the $L_1$-norm (i.e., absolute errors). Outlier-resistant methods employing direct $L_1$-norm decomposition error have been proposed in this context in \cite{tsagkarakis2016l1, ke2005robust, brooks2013pure} and references therein. 

Many real-world applications involve nonlinear and dynamic systems that, in addition to the rank consideration, impose specific structural constraints on the approximated data matrix. This gives rise to the problem of SLRA where the objective is to approximate a given data matrix by another matrix of low rank that also satisfies an imposed structure. Such structural constraints are essential, as they encode inherent properties of the underlying physical processes or system dynamics. While a variety of matrix structures have been explored in the literature (e.g., Sylvester, quadratic, bilinear, etc.) with applications ranging from conic section fitting to fundamental matrix estimation \cite{zhang1997parameter,ma2004invitation,tomasi1993shape}, Hankel and Toeplitz structures,\footnote{A matrix is called Hankel if each anti-diagonal has elements of constant value. A matrix is called Toeplitz if each diagonal has elements of constant value \cite{golub1996matrix}.} given their direct connection to dynamic linear time-invariant models (LTI), have gained significant attention across multiple domains, such as  system identification, spectral estimation, time-series analysis, and direction-of-arrival (DoA) estimation\cite{fazel2013hankel,laskar2023proposed,sun2021hankel,sengar2021system,orfanidis2022time,10051870,10096647}. 

Enforcing both the low-rank and the structure constraint gives rise to a challenging nonconvex optimization problem. Ignoring the structure constraint, the problem reduces to standard ULRA. Under the $L_2$-norm, the solution is obtained by means of SVD, whereas under the $L_1$-norm, a suboptimal solution is readily available for real-valued matrices \cite{tsagkarakis2016l1}. Conversely, ignoring the rank constraint leads to a convex least-squares fit over the structured set. When the structure is a linear subspace, such as the Hankel or Toeplitz class, the unique solution is the orthogonal projection of the data onto that subspace in the Frobenius inner product sense. For Hankel matrices, this projection corresponds to  averaging along anti-diagonals \cite{golyandina2001analysis}. Under the 
$L_1$-norm, the Hankel approximation is obtained by replacing anti-diagonal entries by their respective median \cite{kalantari2016singular}. 
A common approach to obtain an approximate solution that simultaneously satisfies the low-rank and structure constraints under the $L_2$-norm is the method of alternating projections, widely known as Cadzow’s algorithm with variants \cite{cadzow2002signal,yin2021low}. In its basic form, the algorithm alternates between projections onto the subspace of structured matrices (e.g., Hankel or Toeplitz) and the manifold of low-rank matrices progressively enforcing both constraints. Given its simplicity, Cadzow’s algorithm is commonly used in practice. Nevertheless, its convergence has not been formally established for the general low-rank structured approximation problem. Recently, the authors in \cite{knirsch2021optimal} formally proved that in the rank-1 Hankel approximation setting Cadzow’s algorithm converges to a fixed point that is typically not the optimal solution. 
Notably, basic Singular Spectrum Analysis (SSA) \cite{golyandina2001analysis}, an increasingly popular time-series analysis framework, can be viewed as a single iteration of Cadzow’s algorithm in which the rank constraint is relaxed in favor of a fully satisfied Hankel structure constraint \cite{gillard2010cadzow}. Consequently, iterative basic SSA, often referred to as sequential SSA \cite{golyandina2001analysis}, is equivalent to Cadzow’s algorithm.

In this paper, we focus on rank-$1$ Hankel and Toeplitz approximations, a fundamental special case of the SLRA problem, considering both $L_2$ and $L_1$ norms. Since every Toeplitz matrix can be written as the product on an exchange matrix{\footnote{An exchange matrix is a square matrix with ones on the the antidiagonal and zeros everywhere else.}} and a Hankel matrix, it suffices to direct our attention to rank-1 Hankel decompositions of arbitrary matrices. 
Our technical contributions are summarized as follows:
\begin{itemize}
    \item We develop new algorithms for computing both complex and real-valued rank-$1$ Hankel matrix approximations of arbitrary matrices (complex or real) under both $L_2$ and $L_1$-norm error formulations. 
    
    \item As an application of interest, we develop new algorithms for DoA estimation with limited data (few-shot) based on rank-$1$ Hankel matrix approximations under the $L_2$ and $L_1$-norm formulations. Under the $L_2$-norm, we prove that the resulting estimator is maximum-likelihood optimal 
    in the presence of white Gaussian noise.
     Under the $L_1$-norm, we prove that the derived estimator is maximum-likelihood optimal in the presence of Laplace noise. 

    \item We conduct extensive simulations with varying antenna array sizes and real-world data experiments to evaluate the developed Hankel-based few-shot DoA estimators. We demonstrate that the $L_2$-norm estimator exhibits dominant state-of-the-art performance in white Gaussian noise sensing environments. By leveraging synthetic impulsive-noise scenarios and UAV (unmanned aerial vehicle) collected measurements, we examine challenging environments with array imperfections. We demonstrate that in such environments the developed $L_1$-norm estimator exhibits outstanding robustness and accuracy compared to competing frameworks.  
    
\end{itemize}

The rest of the paper is organized as follows. Section II introduces the problem statement, provides technical preliminaries, and presents the notation used throughout the paper. Section III describes in detail the proposed algorithms for the computation of $L_2$ and $L_1$-norm rank-$1$ Hankel-structured approximations, while Section IV demonstrates their application to DoA estimation. Section V presents simulation studies and real-world data experimental results. A few conclusions are drawn in Section VI.

\textit{Notation}: In this paper, matrices are denoted by upper-case bold letters, column vectors by lower-case bold letters, and scalars by lower-case plain-font letters. The transpose operation is represented by the superscript $(\cdot)^T$, conjugation by $(\cdot)^*$, the conjugate transpose (Hermitian) by $(\cdot)^H$, and the Kronecker product by $\otimes$. 

\section{Problem Statement and Notation}

Hankel and Toeplitz matrices are two fundamental structured matrix classes that have attracted significant attention in engineering applications over the years. 
An arbitrary matrix $\mathbf{H} \in \mathbb{C}^{D\times W}$ possesses a Hankel structure if each anti-diagonal has elements of constant value, or equivalently, its entry values only depend on the sum of their indices minus one. That is, a Hankel matrix has elements of the form
$\mathbf{H}_{i,j} = \mathbf{h}_{i+j-1}, {i=1, \cdots, D}, \, {j=1,\cdots,W},$ and
\begin{equation}
\label{hankel_matrix}
\mathbf{H}_{D\times W} =  \left[\begin{array}{ccccc}
h_1 & h_2 &  \cdots & h_W \\
h_2 & h_3 &  \cdots & h_{W+1} \\
h_3 & h_4 &  \cdots & h_{W+2} \\
\vdots &  \vdots & \ddots & \vdots \\
h_D & h_{D+1} &  \cdots & h_{D+W-1}
\end{array}\right]
\end{equation} 
for some parameter vector
$\mathbf{h} \in \mathbb{C}^{M}$,  $M=D+W-1$, that constitutes the entries of the first column and last row,
\begin{equation}
\mathbf{h} \triangleq \left[h_1, h_{2}\ldots, h_{D}, h_{D+1}, \ldots h_{D+W-1}\right]^T.
\end{equation}
We can then define the Hankel operator $\mathcal{H}: \mathbb{C}^M \rightarrow \mathbb{C}^{D \times W}$ that maps a parameter vector to its corresponding Hankel representation by placing side-by-side $W = M-D+1$ sub-vectors  of $\mathbf{h}$ of length $D$ generated by successive one-entry shifts. 

An arbitrary matrix has a Toeplitz structure if it exhibits constant values along its main diagonals. 
Hankel and Toeplitz matrices are closely related, as any Toeplitz matrix admits an equivalent Hankel representation obtained through index reversal. Specifically, defining the $D \times D$ exchange matrix 
\begin{equation}
\label{reversal_matrix}
\mathbf{J}_D \triangleq
\begin{bmatrix}
0 & 0 &  \cdots & 0 & 1 \\
0 & 0 &  \cdots & 1 & 0 \\
\vdots & \vdots  & \iddots & \vdots & \vdots \\
0 & 1 &  \cdots & 0 & 0 \\
1 & 0 &  \cdots & 0 & 0
\end{bmatrix}
\end{equation}
where $\mathbf{J}_D = \mathbf{J}_D^T = \mathbf{J}_D^{-1}$,
any Toeplitz matrix $\mathbf{T} \in \mathbb{C}^{D\times W}$ can be represented by a corresponding Hankel matrix $\mathbf{H} \in \mathbb{C}^{D\times W}$ by 
\begin{equation}
\label{T_H}
\mathbf{T}=\mathbf{J}_D \mathbf{H}.
\end{equation} 
Considering that {\em{(i)}} element-wise $L_2$ matrix norm (i.e., Frobenius norm) $\|\mathbf{A}\|_2 \triangleq \sqrt{\sum_{i,j} \left|a_{i,j}\right|^2}$ is unitarily invariant, {\em{(ii)}} element-wise $L_1$ matrix norm $\|\mathbf{A}\|_1 \triangleq {\sum_{i,j} \left|a_{i,j}\right|}$ is invariant under permutation matrices such as the exchange matrix $\mathbf{J}$, and {\em{(iii)}}  matrix rank is preserved under multiplication by an invertible matrix, it follows from (\ref{T_H}) that the Toeplitz approximation problem of an arbitrary matrix $\mathbf{X}\in \mathbb{C}^{D \times W}$ is  equivalent to the Hankel approximation problem of $\mathbf{J}_D \mathbf{X}$ under either norm, 
\begin{equation}
\begin{split}
\min _{\mathbf{T}}\|\mathbf{X}-\mathbf{T}\| =\min _{\mathbf{H}}\left\|\mathbf{X}-\mathbf{J}_D \mathbf{H}\right\|= \\
\min _{\mathbf{H}}\left\|\mathbf{J}_D\left(\mathbf{X}-\mathbf{J}_D \mathbf{H}\right)\right\|=\min _{\mathbf{H}}\left\|\mathbf{J}_D \mathbf{X}-\mathbf{H}\right\| .
\end{split}
\end{equation} 

In the following, for clarity and without loss of generality, we consider only Hankel matrix approximations. In particular, we address the problem of approximating an arbitrary matrix $\mathbf{X}\in \mathbb{C}^{D \times W}$ by a rank-$1$ Hankel matrix. Rank-$1$ Hankel approximations give rise to the challenging non-convex optimization problem
\begin{equation}
\label{og_rank1_formulation}
  \mathbf{H}_{opt}=\underset{\substack{ \mathbf{H} \in \mathbb{H}, \,\,
  \operatorname{rank}(\mathbf{H})=1}}{\operatorname{argmin}}\|\mathbf{X}-\mathbf{H}\|
\end{equation}
where $\mathbb{H}$ denotes the set of Hankel matrices of dimension $D \times W$ and $\|\cdot\|$ represents the matrix norm of interest ($L_2$ or $L_1$, herein). We define the normalized polynomial structure vector in $z \in \mathbb{C}$
\begin{equation}
\label{structure_vector}
\mathbf{s}_D(z) \triangleq \frac{\hat{\mathbf{s}}_D(z)}{\left\|\hat{\mathbf{s}}_D(z)\right\|_2}
\end{equation} where $\hat{\mathbf{s}}_D(z) \triangleq\left[1, z, z^2, \ldots, z^{D-1}\right]^{T}$ and
\begin{equation}
\label{norm}
\left\|\hat{\mathbf{s}}_D(z)\right\|_2 = \begin{cases}\sqrt{\frac{1-|z|^{2 D}}{1-|z|^2}}, & |z| \neq 1, \\ \sqrt{D}, & |z|=1 .\end{cases}
\end{equation}
It is known \cite{knirsch2021optimal} that  a matrix $\mathbf{H}\in \mathbb{C}^{D \times W}$ is Hankel rank-1 if and only if it can be written as 
\begin{equation}
\label{rank1_lemma}
    \mathbf{H} = c \mathbf{s}_D(z) \mathbf{s}_W(z)^{T}
\end{equation} for some $c \in \mathbb{C} \backslash\{0\}$ and $z \in \mathbb{C} \cup\{\infty\}$. It directly follows from (\ref{rank1_lemma}) that the optimization problem in (\ref{og_rank1_formulation}) can be analytically reformulated in terms of two complex scalar parameters \cite{knirsch2021optimal} as
\begin{equation}
\label{mod_rank_1_formulation}
(\hat{c}, \hat{z})=\underset{c \in \mathbb{C} \backslash\{0\}, \text{ } z \in \mathbb{C} \cup\{\infty\}}{\operatorname{argmin}}\left\|\mathbf{X}-c \mathbf{s}_D(z) \mathbf{s}_W(z)^T\right\| 
\end{equation}
with $\mathbf{H}_{opt} = \hat{c}\mathbf{s}_D(\hat{z}) \mathbf{s}_W(\hat{z})^T$.
To avoid the inconvenient case of $z = \infty$, it suffices to assume that $\left|x_{1,1}\right| \geq\left|x_{D, W}\right|$. This assumption does not pose any practical restriction on $\mathbf{X}$, as any matrix that violates it can be replaced by its flipped counterpart $\mathbf{J}_D \mathbf{X}\mathbf{J}_W$ \cite{knirsch2021optimal}.
In the following, we are interested in developing algorithms that directly solve the optimization problem in (\ref{mod_rank_1_formulation}) under both the $L_2$ and $L_1$-norm. 

\section{Algorithms for $L_2$-norm and $L_1$-norm Rank-$1$ Hankel Approximations}
\subsection{$L_2$-norm Rank-$1$ Hankel Matrix Approximations}
\label{l2_section}
We revisit the approximation problem in (\ref{mod_rank_1_formulation}) under the $L_2$-norm
\begin{equation}
\label{l2_rank1}
(\hat{c}_{L_2}, \hat{z}_{L_2})=\underset{c \in \mathbb{C} \backslash\{0\},  z \in \mathbb{C} }{\operatorname{argmin}}\left\|\mathbf{X}-c \mathbf{s}_D(z) \mathbf{s}_W(z)^T\right\|_2
\end{equation}
to produce an optimal rank-1 Hankel approximation of $\mathbf{X}$ of the form
$\mathbf{H}_{opt}^{L_2} = \hat{c}_{L_2} \mathbf{s}_D(\hat{z}_{L_2}) \mathbf{s}_W(\hat{z}_{L_2})^T$.
Direct differentiation of (\ref{l2_rank1}) with respect to $c$ collapses the search to a single complex variable $z$,
\begin{align}
\label{theorem_1}
    \hat{z}_{L_2} & = \underset{z \in \mathbb{C}}{\operatorname{argmax}}\left|\mathbf{s}_D(z)^H \mathbf{X} \mathbf{s}_W(z)^*\right|, \\
    \hat{c}_{L_2} & =\mathbf{s}_D(\hat{z}_{L_2})^H \mathbf{X} \mathbf{s}_W(\hat{z}_{L_2})^*.
    \label{c}
\end{align}
To obtain a practical numerical algorithm, we exploit Theorem 3.4 in \cite{knirsch2021optimal} which restricts the search domain from the full complex plane to the complex unit disc through two complementary optimization problems, 
\begin{equation}
A = \underset{z \in \mathbb{C}, |z|\leq1}{\operatorname{max}}\left|\mathbf{s}_D(z)^H \mathbf{X} \mathbf{s}_W(z)^*\right|,
\label{F2}
\end{equation}
\begin{equation}
B= \underset{z \in \mathbb{C}, |z|\leq1}{\operatorname{max}}\left|\mathbf{s}_D(z)^H \mathbf{J}_D \mathbf{X} \mathbf{J}_W \mathbf{s}_W(z)^*\right|
\label{F2_prime}
\end{equation}
where $\mathbf{J}_D$, $\mathbf{J}_W$ are the exchange matrices as in (\ref{reversal_matrix}). 
If $A \geq B$, then 
\begin{equation}
   \hat{z}_{L_2} =  \underset{z \in \mathbb{C}, |z|\leq1}{\operatorname{argmax}}\left|\mathbf{s}_D(z)^H \mathbf{X} \mathbf{s}_W(z)^*\right|.
   \label{one}
\end{equation}
Else,
\begin{equation}
   \hat{z}_{L_2} = \left[  \underset{z \in \mathbb{C}, |z|\leq1}{\operatorname{argmax}}\left|\mathbf{s}_D(z)^H \mathbf{J}_D \mathbf{X} \mathbf{J}_W\mathbf{s}_W(z)^*\right| \right]^{-1}.
   \label{two}
\end{equation}

To solve (\ref{F2}), (\ref{F2_prime}), we carry out polar grid search over  $z = \rho e^{j\phi}$, $\rho \leq 1$, with granularity step $\Delta \rho$ and $\Delta \phi$, respectively. The algorithmic complexity becomes  $\mathcal{O}( \frac{1}{\Delta \rho}  \frac{2 \pi}{\Delta \phi}  DW)$. The algorithm is summarized in Fig. \ref{l2_grid_alg} for easy reference.

\begin{figure}[htbp]
\renewcommand{\arraystretch}{1.15}
{\fontsize{9pt}{9}\selectfont
    {\hrule height 0.2mm}
    \vspace{0.3mm}
    {\hrule height 0.2mm}
    \vspace{1mm}
    {\bf Algorithm: Complex Rank-$1$ Hankel Approximation in $L_2$-norm}
    \vspace{0.0mm}
    {\hrule height 0.2mm}
    \vspace{2mm}
    \textbf{Input: } Data matrix $\mathbf{X} \in \mathbb{C}^{D \times W}$, radial step $\Delta \rho$, angular step $\Delta \phi$\\[1mm]
    \begin{tabular}{r l}
        1: & Generate grid $\mathcal{G}_z$ to solve (\ref{F2}), (\ref{F2_prime}). \\[1mm]
        2: & Compute $ A = \underset{z \in \mathcal{G}_z}{\operatorname{max}}\left|\mathbf{s}_D(z)^H \mathbf{X} \mathbf{s}_W(z)^*\right|$ in (\ref{F2}). \\[1mm]
        3: & Compute $ B = \underset{z \in \mathcal{G}_z}{\operatorname{max}}\left|\mathbf{s}_D(z)^H  \mathbf{J}_D \mathbf{X}  \mathbf{J}_W \mathbf{s}_W(z)^*\right|$ in (\ref{F2_prime}). \\[1mm]
        4: & \textbf{if} $A \geq B$ \textbf{then}: \\[1mm]
        5: & \hspace{12pt} $\hat{z}_{L_2} = \underset{z \in \mathcal{G}_z}{\operatorname{argmax}}\left|\mathbf{s}_D(z)^H \mathbf{X} \mathbf{s}_W(z)^*\right|$ \\[1mm]
        6: & \textbf{else}: \\[1mm]
        7: & \hspace{12pt} $\hat{z}_{L_2} = \left[\underset{z \in \mathcal{G}_z}{\operatorname{argmax}} \left|\mathbf{s}_D(z)^H  \mathbf{J}_D \mathbf{X}  \mathbf{J}_W \mathbf{s}_W(z)^*\right| \right]^{-1}$. \\[1mm]
        8: & \textbf{end if} \\[1mm]
        9: &  Calculate $\hat{c}_{L_2}=\mathbf{s}_D(\hat{z}_{L_2})^H \mathbf{X} \mathbf{s}_W(\hat{z}_{L_2})^*$ in (\ref{c}).\\[1mm]
        10: & Form $\mathbf{H}_{opt}^{L_2} = \hat{c}_{L_2}\,\mathbf{s}_D(\hat{z}_{L_2})\,\mathbf{s}_W(\hat{z}_{L_2})^{T}$ in (\ref{rank1_lemma}).\\[1mm]
    \end{tabular} \\[1mm]
    \textbf{Output: } Rank-1 Hankel approximation $\mathbf{H}_{opt}^{L_2} \in \mathbb{C}^{D \times W}$. \\[1mm]
    {\hrule height 0.2mm}
    \vspace{0.3mm}
    {\hrule height 0.2mm}
    \vspace{0.4cm}
}
\caption{Algorithm for complex rank-1 Hankel approximation of arbitrary complex matrices in $L_2$-norm.}
\vspace{-0.2cm}
\label{l2_grid_alg}
\renewcommand{\arraystretch}{1}
\end{figure}

It is worth noting that the algorithm as presented in Fig. \ref{l2_grid_alg} can be readily applied to produce real-valued rank-$1$ Hankel matrix approximations $\mathbf{H}_{opt}^{L_2} \in \mathbb{R}^{D \times W}$ of arbitrary complex or real matrices ${\mathbf{X}}_{D \times W}$ by restricting the search for $z = \rho e^{j\phi}$, $\rho \leq 1$, to $\phi = 0$. Then, $z \in [-1, 1]$ and grid search complexity reduces to 
 $\mathcal{O}( \frac{1}{\Delta \rho}   DW)$. It is also worth noting that, as is, the algorithm in  Fig. \ref{l2_grid_alg} can produce complex approximations of real-valued input matrices supporting quadrature signal representation applications.

\subsection{$L_1$-norm Rank-$1$ Hankel Matrix Approximations}
Next, we consider rank-1 Hankel approximations of arbitrary matrices under  
 the $L_1$-norm, 
\begin{equation}
\label{l1_rank1}
(\hat{c}_{L_1}, \hat{z}_{L_1})=\underset{c \in \mathbb{C} \backslash\{0\}, \text{ } z \in \mathbb{C} }{\operatorname{argmin}}\left\|\mathbf{X}-c \mathbf{s}_D(z) \mathbf{s}_W(z)^T\right\|_1.
\end{equation}
For clarity in presentation, we introduce the normalization factor
\begin{equation}
\label{alpha_z}
\alpha(z) \triangleq
\frac{1}{\sqrt{\sum_{i=1}^{D} |z|^{2(i-1)}}\sqrt{\sum_{j=1}^{W} |z|^{2(j-1)}}}
\end{equation}
and then proceed to expand (\ref{l1_rank1}) to
\begin{IEEEeqnarray}{rCl}
\IEEEeqnarraymulticol{3}{l}{
\underset{c \in \mathbb{C} \backslash\{0\}, z \in \mathbb{C}}{\operatorname{argmin}}\left\|\mathbf{X}-c \mathbf{s}_D(z) \mathbf{s}_W(z)^{\top}\right\|_1 =
} \nonumber\\[4pt]
&=& \displaystyle \underset{c \in \mathbb{C} \backslash\{0\}, z \in \mathbb{C}}{\operatorname{argmin}} \sum_{i=1}^D \sum_{j=1}^W|z|^{i+j-2}\left|x_{i j} z^{-(i+j-2)}-c \alpha(z)\right|. \nonumber\\[3pt]
\label{l1_expand}
\end{IEEEeqnarray}
For any fixed $z \in \mathbb{C}$ and $\tilde{c} \triangleq c \alpha(z)$, the minimization over $c$ is equivalent to minimization over $\tilde{c}$, 
\begin{equation}
\label{geometric_median_problem}
\hat{\tilde{c}}(z)=\underset{\tilde{c} \in \mathbb{C}}{\operatorname{argmin}} \sum_{i=1}^D \sum_{j=1}^W|z|^{i+j-2}\left|x_{i j} z^{-(i+j-2)}-\tilde{c}\right|.
\end{equation} Then, the optimal scalar coefficient $\hat{c}$ for the original optimization problem can be recovered as 
\begin{equation}
\label{c_l1}
   \hat{c}_{L_1}(z) = \frac{\hat{\tilde{c}}(z)}{\alpha(z)}.
\end{equation}

We recognize that the optimization problem in (\ref{geometric_median_problem}) corresponds to computing the weighted geometric median, which can be efficiently calculated via Weiszfeld's iterative algorithm \cite{weiszfeld1937point}. 
Therefore, unlike the $L_2$-norm case where the optimal coefficient $c$ admits a closed-form analytical solution, the $L_1$-norm formulation yields an iterative computation procedure. Reduced to a single variable $z \in \mathbb{C}$, the optimization problem of interest is 
\begin{equation}
\hat{z}_{L_1}=\underset{z \in \mathbb{C}}{\operatorname{argmin}}\left\|\mathbf{X}-\hat{c}_{L_1}(z) \mathbf{s}_D(z) \mathbf{s}_W(z)^T\right\|_1,
\end{equation}
with $\hat{c}_{L_1}(z)$ in (\ref{c_l1}).

To restrict the grid search to the unit disc, $z = \rho e^{j\phi}$, $\rho \leq 1$, we solve two minimization problems,
\begin{equation}
C = \underset{z \in \mathbb{C}, |z|\leq 1}{\operatorname{min}}\left\|\mathbf{X}-\hat{c}_{L_1}(z) \mathbf{s}_D(z) \mathbf{s}_W(z)^T\right\|_1,
\label{equationC}
\end{equation}
\begin{equation}
 D =   \underset{z \in \mathbb{C}, |z|\leq 1}{\operatorname{min}}\left\|\mathbf{X}-\hat{c}_{L_1}(1/z) \mathbf{s}_D(1/z) \mathbf{s}_W(1/z)^T\right\|_1.
 \label{equationD}
\end{equation}
If $C \leq D$, 
\begin{equation}
\hat{z}_{L_1} =    \underset{z \in \mathbb{C}, |z|\leq 1}{\operatorname{argmin}}\left\|\mathbf{X}-\hat{c}_{L_1}(z) \mathbf{s}_D(z) \mathbf{s}_W(z)^T\right\|_1.
\label{F1}
\end{equation}
Else, 
\begin{equation}
    \hat{z}_{L_1} =  \left[ \underset{z \in \mathbb{C}, |z|\leq 1}{\operatorname{argmin}}\left\|\mathbf{X}-\hat{c}_{L_1}(1/z) \mathbf{s}_D(1/z) \mathbf{s}_W(1/z)^T\right\|_1 \right]^{-1}.
    \label{F1_prime}
\end{equation}

To solve (\ref{F1}), (\ref{F1_prime}), we carry out polar grid search over  $z = \rho e^{j\phi}$, $\rho \leq 1$, with granularity steps $\Delta \rho$ and $\Delta \phi$. In contrast to the $L_2$-norm case, there is no closed from expression available for the scaling parameter $c$ in (\ref{geometric_median_problem}), (\ref{c_l1}). As a result, while still manageable, the algorithmic complexity increases to  $\mathcal{O}( \frac{1}{\Delta \rho}  \frac{2 \pi}{\Delta \phi}  D^2W^2 T)$ where $T$ is the iteration bound in the Weiszfeld procedure \cite{weiszfeld1937point}. The algorithm is summarized in Fig. \ref{l1_grid_alg} for easy reference.

 
\begin{figure}[htbp]
\renewcommand{\arraystretch}{1.15}
{\fontsize{9pt}{9}\selectfont
    {\hrule height 0.2mm}
    \vspace{0.3mm}
    {\hrule height 0.2mm}
    \vspace{1mm}
    {\bf Algorithm: Complex Rank-$1$ Hankel Approximation in $L_1$-norm}
    \vspace{0.0mm}
    {\hrule height 0.2mm}
    \vspace{2mm}
    \textbf{Input: } Data Matrix $\mathbf{X} \in \mathbb{C}^{D \times W}$, radial step $\Delta \rho$, angular step $\Delta \phi$\\[1mm]
    \begin{tabular}{r l}
        1: & Generate grid $\mathcal{G}_z$ to solve (\ref{equationC}), (\ref{equationD}). \\[1mm]
        2: & Compute $C = \underset{z \in \mathcal{G}_z}{\operatorname{min}}\left\|\mathbf{X}-\hat{c}_{L_1}(z) \mathbf{s}_D(z) \mathbf{s}_W(z)^T\right\|_1$ in (\ref{equationC}).
       \\[1mm]
        3: & Compute $D =   \underset{z \in \mathcal{G}_z}{\operatorname{min}}\left\|\mathbf{X}-\hat{c}_{L_1}(1/z) \mathbf{s}_D(1/z) \mathbf{s}_W(1/z)^T\right\|_1$\\ 
        & in (\ref{equationD}).
        \\[1mm]
        4: & \textbf{if} $C \leq D$ \textbf{then}: \\[1mm]
        5: & \hspace{12pt} $\hat{z}_{L_1} =    \underset{z \in \mathcal{G}_z}{\operatorname{argmin}}\left\|\mathbf{X}-\hat{c}_{L_1}(z) \mathbf{s}_D(z) \mathbf{s}_W(z)^T\right\|_1$. \\ [1mm]
        6: & \textbf{else}: \\[1mm]
        7: & \hspace{12pt} 
         $\hat{z}_{L_1} = \left[ \underset{z \in \mathcal{G}_z}{\operatorname{argmin}}\left\|\mathbf{X}-\hat{c}_{L_1}(1/z) \mathbf{s}_D(1/z) \mathbf{s}_W(1/z)^T\right\|_1 \right]^{-1}.$ \\[1mm]
        8: & \textbf{end if} \\[1mm]
        9: & Form $\mathbf{H}_{opt}^{L_1} = \hat{c}_{L_1}\,\mathbf{s}_D(\hat{z}_{L_1})\,\mathbf{s}_W(\hat{z}_{L_1})^{T}$ in (\ref{rank1_lemma}).\\[1mm]
    \end{tabular} \\[1mm]
    \textbf{Output: } Rank-1 Hankel approximation $\mathbf{H}_{opt}^{L_1} \in \mathbb{C}^{D \times W}$. \\[1mm]
    {\hrule height 0.2mm}
    \vspace{0.3mm}
    {\hrule height 0.2mm}
    \vspace{0.4cm}
}
\caption{Algorithm for complex rank-1 Hankel approximation of arbitrary complex matrices in $L_1$-norm.}
\vspace{-0.2cm}
\label{l1_grid_alg}
\renewcommand{\arraystretch}{1}
\end{figure}

The algorithm in Fig. \ref{l1_grid_alg} can be used as is to produce complex rank-1 Hankel approximations of real data matrices $\mathbf{X} \in \mathbb{R}^{D \times W}$. The algorithm can produce real approximations of real or complex data matrices by restricting the grid search for $z$ to $z \in [-1, 1]$. Then, the algorithmic complexity reduces to $\mathcal{O}( \frac{1}{\Delta \rho}  D^2W^2 T)$.

\section{Application to Signal Direction-of-Arrival Estimation}
The problem of estimating the Direction-of-Arrival (DoA) of a signal has been at the forefront of array signal processing for decades \cite{van2002optimum,526899}, with pivotal technological applications in diverse areas including but not limited to wireless communications, remote sensing, radar and sonar technologies, and nowadays autonomous systems and robotic mobility \cite{622504,7815340,9219157,9725025,9722876, 9148550,9726790,10160524,10018231}. Unstructured Low-Rank  Approximation (ULRA) of the received signal autocorrelation matrix gave rise to highly successful subspace-based signal direction finding methods, such as MUSIC \cite{1143830}, ESPRIT \cite{32276}, and other \cite{80966,4063549,9516894}.
With the advent of highly mobile autonomous platforms operating in environments of severely limited statistical coherence time, 
the ability to wait and collect statistically stationary antenna-array snapshots to form a high-quality estimate of the input autocorrelation matrix is limited - if not completely negated. As a result, we need to consider scenaria where DoA estimation must be performed from only few antenna-array snapshots \cite{10051870,10096647}. 
Moreover, while the number of available sensing  elements has increased significantly in deployed large and massive MIMO antenna array systems, there are stringent cost-induced constraints in the number of independent RF chains and analog-to-digital converters (ADC) that operate simultaneously in the backend to produce independent data samples.

Few-shot signal DoA estimation on large arrays with limited number of RF ADC
chains constitutes a direct application of interest to the theory and methods of Hankel-structured decompositions of arbitrary complex data matrices, as developed in Section III. Details follow below.

\subsection{Signal Model and Sensing Architecture}
\label{signal_model}
For clarity and simplicity in our presentation, we consider a uniform linear antenna array (ULA) with $M$ elements. If $\theta \in\left[-90^{\circ}, 90^{\circ}\right)$ denotes the signal incidence angle with respect to the broadside, then the array response vector is given by
\begin{equation}
\label{array_response_vector}
    \mathbf{a}_M(\theta)\triangleq\left[1, e^{-j 2 \pi \frac{d}{\lambda} \sin \theta}, \cdots, e^{-j 2 \pi(M-1) \frac{d}{\lambda} \sin \theta}\right]^T
\end{equation}
where $d$ is the inter-element spacing and $\lambda$ is the carrier wavelength. Considering the fundamental problem where a single narrowband signal from the far field impinges on the antenna array, the baseband received antenna-array sample, $\mathbf{r} \in \mathbb{C}^M$, is of the form 
\begin{equation}\label{signal_model_1_signal}
    \mathbf{r} = x \mathbf{a}_M(\theta) + \mathbf{n}
\end{equation}
where $x \in \mathbb{C}$ denotes the complex fixed unknown signal amplitude and $\mathbf{n} \in \mathbb{C}^M$ represents additive noise.

\begin{figure}[htbp]
    \centering
    \includegraphics[width=0.8
    \linewidth]{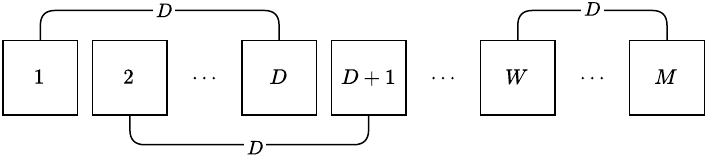}
    \caption{Sliding-window acquisition over a uniform linear array (ULA) of $M$ antenna elements using $W$ overlapping subarrays of length $D$.}
    \label{fig:sliding_array_arch}
\end{figure}

We consider a sliding subarray data acquisition model, illustrated in Fig. \ref{fig:sliding_array_arch}, in which only a subset of $D$ antenna elements is accessible at a given time by available processing RF chains. By sequentially sliding a window of length $D$ across the array, the full aperture is synthesized using $W=M-D+1$ overlapping subarrays. Specifically, the $i$th subarray data vector is given by 
\begin{equation}
\label{sliding_sub}
\begin{split}
    \mathbf{r}_i=x\left[a_M(\theta)_{i+1}, a_M(\theta)_{i+2}, \ldots, a_M(\theta)_{i+D} \right]^T + \mathbf{n}_i, \\ \quad i = 0, \ldots, W-1,
\end{split}
\end{equation}
where $\mathbf{n}_i \in \mathbb{C}^D$ denotes the additive noise vector associated with the $i$th subarray acquisition. 
%

\subsection{Rank-$1$-Hankel Direction-of-Arrival Estimation}
Next, we stack the subarray acquired observations side-by-side to form the combined measurement matrix $\mathbf{X} \in \mathbb{C}^{D \times W}$, 
\begin{equation}
\label{agg_meas_matrix}
    \mathbf{X} = [\mathbf{r}_0,\cdots,\mathbf{r}_{W-1}],
\end{equation} where the $i$th column of $\mathbf{X}$ corresponds to the observation obtained from the $i$th sliding subarray $\mathbf{r}_i$ in (\ref{sliding_sub}). 
While $\mathbf{X}$ does not, in general, possess a Hankel structure due to the individual noise contributions across subarrays, the underlying signal component admits an exact rank-$1$ Hankel representation
\begin{equation}
    \mathbf{X}_{S} = x 
    \left[\begin{array}{c}
    1 \\
    e^{-j 2 \pi \frac{d}{\lambda} \sin \theta} \\
    
    \vdots \\
    e^{-j 2 \pi (D-1) \frac{d}{\lambda} \sin \theta}
    \end{array}\right]
    \left[\begin{array}{c}
    1 \\
    e^{-j 2 \pi \frac{d}{\lambda} \sin \theta} \\
    \vdots \\
    e^{-j 2 \pi (W-1) \frac{d}{\lambda} \sin \theta}
    \end{array}\right]^T,
\end{equation}
which is of the form 
\begin{equation}
   \mathbf{X}_{S} =  c \mathbf{s}_D(z) \mathbf{s}_W(z)^T
\end{equation}
with 
\begin{equation}
    z = e^{-j 2 \pi \frac{d}{\lambda} \sin \theta}, 
    \quad \theta \in [-90^\circ,90^\circ).
\end{equation} 

Consequently,  in parallel to the technical developments in Section III we propose the following two signal Direction-of-Arrival estimators:
\begin{equation}
\label{l2_rank1_doa}
\hat{\theta}_{L_2}=\underset{\theta \in\left[-90^{\circ}, 90^{\circ}\right)}{\operatorname{argmin}}\left\|\mathbf{X}-\hat{c}_{L_2}(z(\theta)) \mathbf{s}_D(z(\theta)) \mathbf{s}_W(z(\theta))^T\right\|_2
\end{equation} 
and 
\begin{equation}
\label{l1_rank1_doa}
\hat{\theta}_{L_1} =\underset{\theta \in\left[-90^{\circ}, 90^{\circ}\right)}{\operatorname{argmin}}\left\|\mathbf{X}-\hat{c}_{L_1}(z(\theta)) \mathbf{s}_D(z(\theta)) \mathbf{s}_W(z(\theta))^T\right\|_1.
\end{equation} 
The estimate in (\ref{l2_rank1_doa}) is computed by the algorithm in Fig. \ref{l2_grid_alg} and the estimate in (\ref{l1_rank1_doa}) is computed by the algorithm in Fig. \ref{l1_grid_alg}, both with a simplified search over $z \in \mathbb{C}$ with $|z|=1$ and phase $\theta \in [-90^\circ,90^\circ)$. The following two Theorems establish the optimality of $\hat{\theta}_{L_2}$ and $\hat{\theta}_{L_1}$ under corresponding statistical assumptions. The proofs are given in the Appendix.

\begin{theorem}
\label{ml_gaussian}
Under independent identically distributed (i.i.d.)\ isotropic complex Gaussian noise per sensor, the proposed estimator in (\ref{l2_rank1_doa}) is maximum-likelihood optimal. \hfill $\blacksquare$
\end{theorem}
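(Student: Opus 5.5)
Under the model of Section \ref{signal_model}, the noise entries of $\mathbf{X}$ are independent across subarray acquisitions and sensors, so the plan is to write down the joint likelihood of $\mathbf{X}$, observe that maximizing it is the same as minimizing a Frobenius-norm fit, and then profile out the nuisance amplitude. The first step is to write $\mathbf{X} = \mathbf{X}_S(\theta, x) + \mathbf{N}$, where $\mathbf{N} = [\mathbf{n}_0,\ldots,\mathbf{n}_{W-1}]$ has i.i.d.\ $\mathcal{CN}(0,\sigma^2)$ entries. Here we treat each acquisition's noise $\mathbf{n}_i$ as independent of the others, as the sequential sliding model implies. The joint density is then
\begin{equation*}
p(\mathbf{X};\theta,x,\sigma^2) = \frac{1}{(\pi\sigma^2)^{DW}} \exp\!\left(-\frac{1}{\sigma^2}\left\|\mathbf{X} - x\,\hat{\mathbf{s}}_D(z(\theta))\hat{\mathbf{s}}_W(z(\theta))^T\right\|_2^2\right).
\end{equation*}
For any fixed $\sigma^2$, or after profiling $\sigma^2$ out through its closed-form estimate $\hat\sigma^2 = \|\cdot\|_2^2/(DW)$, which is monotone in the residual, maximizing the likelihood over $(\theta,x)$ is equivalent to minimizing $\|\mathbf{X} - x\,\hat{\mathbf{s}}_D\hat{\mathbf{s}}_W^T\|_2$ over $\theta\in[-90^\circ,90^\circ)$ and $x\in\mathbb{C}$.

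The second step is a reparametrization. With $|z|=1$, (\ref{norm}) gives $\hat{\mathbf{s}}_D\hat{\mathbf{s}}_W^T = \sqrt{DW}\,\mathbf{s}_D(z)\mathbf{s}_W(z)^T$, so we set $c = \sqrt{DW}\,x$. This is a bijection of $\mathbb{C}$, and the excluded value $c=0$ never changes the minimizer once $\mathbf{X}\neq 0$. The ML problem therefore becomes exactly the joint minimization over $(c,\theta)$ of $\|\mathbf{X} - c\,\mathbf{s}_D(z(\theta))\mathbf{s}_W(z(\theta))^T\|_2$.

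The third step profiles out $c$. For fixed $\theta$ this is a linear least-squares problem in $c$. Because $\|\mathbf{s}_D\mathbf{s}_W^T\|_2 = 1$, the minimizer is $\hat{c}_{L_2}(z) = \mathbf{s}_D(z)^H\mathbf{X}\mathbf{s}_W(z)^*$, as in (\ref{c}), obtained from the inner product $\operatorname{tr}((\mathbf{s}_D\mathbf{s}_W^T)^H\mathbf{X})$. Hence $\min_{c,\theta} = \min_\theta \|\mathbf{X} - \hat{c}_{L_2}(z(\theta))\mathbf{s}_D\mathbf{s}_W^T\|_2$, which is precisely (\ref{l2_rank1_doa}). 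Equivalently, the residual equals $\|\mathbf{X}\|_2^2 - |\hat c_{L_2}|^2$, consistent with (\ref{theorem_1}) restricted to the unit circle.

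The main obstacle is modelling rather than algebra. We must fix the noise covariance of $\mathrm{vec}(\mathbf{X})$ as $\sigma^2\mathbf{I}_{DW}$, since a sensor observed in several windows receives fresh noise each time. We must also make sure the nuisance parameters $x$ and $\sigma^2$ are handled as deterministic unknowns, so that concentrating them out preserves the argmax in $\theta$. Finally, the map $\theta\mapsto z(\theta)$ must be injective on $[-90^\circ,90^\circ)$ when $d\le\lambda/2$; otherwise the statement should be read as ML optimality in $z$.
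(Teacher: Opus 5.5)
Your proof is correct and follows essentially the paper's argument: under white Gaussian noise with a deterministic amplitude, maximizing the likelihood is a Frobenius least-squares fit, and concentrating out the amplitude gives exactly (\ref{l2_rank1_doa}). The difference is packaging: the paper vectorizes to the virtual single-snapshot model $\mathbf{r}^v = x\,\mathbf{a}^v_{DW}(\theta)+\mathbf{n}^v$ and cites the standard deterministic-ML form $\arg\max_\theta|\mathbf{a}^v_{DW}(\theta)^H\mathbf{r}^v|^2$, whereas you derive that concentration directly in matrix form and make explicit the $\sqrt{DW}$ rescaling of $x$, the profiling of $\sigma^2$, and the independence of noise across windows, all of which the paper leaves implicit.
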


\begin{theorem}
\label{ml_laplace}
Under i.i.d.\ isotropic complex Laplace noise per sensor, the proposed estimator in (\ref{l1_rank1_doa}) is maximum-likelihood optimal. \hfill $\blacksquare$
\end{theorem}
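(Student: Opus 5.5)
The statement to prove is Theorem~\ref{ml_laplace}, and the proof plan mirrors the one for Theorem~\ref{ml_gaussian}. The first step is to write down the joint likelihood of the full measurement matrix $\mathbf{X}$ in (\ref{agg_meas_matrix}) as a function of the unknowns $(x,\theta)$. Under the sliding-subarray model (\ref{sliding_sub}), the entries of $\mathbf{X}$ have the form
\begin{equation*}
x_{ij}=c\,[\mathbf{s}_D(z)\mathbf{s}_W(z)^T]_{ij}+n_{ij},\qquad z=e^{-j2\pi\frac{d}{\lambda}\sin\theta}.
\end{equation*}
Here $c=x\sqrt{DW}$ absorbs the normalization, since $\|\hat{\mathbf{s}}_D(z)\|_2=\sqrt{D}$ on $|z|=1$ by (\ref{norm}). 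We treat the $DW$ noise samples $n_{ij}$ as i.i.d.; each subarray acquisition is a separate per-sensor noise realization, so this matches the hypothesis of the theorem. The map $(x,\theta)\mapsto(c,z)$ is a bijection onto $\mathbb{C}\times\{|z|=1\}$, restricted to the phases allowed by $\theta\in[-90^\circ,90^\circ)$. By invariance of the ML estimator under reparametrization, it therefore suffices to maximize the likelihood over $(c,z)$ with $|z|=1$.

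For the isotropic complex Laplace density we use $p(n)=\frac{2}{\pi\sigma^2}\exp(-\frac{2|n|}{\sigma})$. Isotropy is the key point: the density depends on $n$ only through the modulus $|n|$, not through $|\Re n|+|\Im n|$. Taking logs and dropping the constant, maximizing the log-likelihood becomes
\begin{equation*}
\min_{c\in\mathbb{C},\,|z|=1}\sum_{i,j}\big|x_{ij}-c\,[\mathbf{s}_D(z)\mathbf{s}_W(z)^T]_{ij}\big|
=\min_{c,\,|z|=1}\big\|\mathbf{X}-c\,\mathbf{s}_D(z)\mathbf{s}_W(z)^T\big\|_1 .
\end{equation*}
The noise scale $\sigma$ enters only as a positive multiplier plus an additive term, so the argmin does not depend on it, even if $\sigma$ is unknown and estimated jointly.

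Next we concentrate out $c$. For each fixed $z$ on the unit circle, minimizing over $c$ is exactly the problem in (\ref{geometric_median_problem}). Since $|z|=1$, all weights $|z|^{i+j-2}$ equal one and $\alpha(z)=1/\sqrt{DW}$, so the minimizer is the geometric median of the points $x_{ij}z^{-(i+j-2)}$. That minimizer is $\hat{c}_{L_1}(z)$ from (\ref{c_l1}). The point $c=0$ is excluded in the paper's formulation, but it can be included harmlessly: the estimator's objective uses the actual minimizer, and a minimizer at $c=0$ gives a degenerate but still valid likelihood value. The profile (concentrated) likelihood in $z$, hence in $\theta$, is therefore $-\|\mathbf{X}-\hat{c}_{L_1}(z(\theta))\mathbf{s}_D\mathbf{s}_W^T\|_1$ up to monotone transformation. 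By the standard fact that maximizing the profile likelihood yields the joint ML estimate, its argmax is exactly $\hat{\theta}_{L_1}$ in (\ref{l1_rank1_doa}).

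I expect the main obstacle to be justifying the modeling steps rather than the algebra. There are two issues. The first is that isotropic complex Laplace noise yields the modulus $|n|$, which matches the element-wise complex $L_1$ norm defined earlier; a separable real/imaginary Laplace model would not match it. The second is that overlapping subarrays observe the same sensors at different acquisition times, so the noise entries of $\mathbf{X}$ are independent across all $(i,j)$ even though the signal entries repeat along anti-diagonals. I would state both points explicitly as part of the noise model. One more detail: the map $\theta\mapsto z$ may fail to be injective when $d>\lambda/2$. In that case ML is defined up to the same ambiguity, and the equivalence still holds set-wise.
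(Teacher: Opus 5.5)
Your proposal is correct and follows essentially the same route as the paper. The isotropic Laplace likelihood reduces to minimizing $\|\mathbf{X}-c\,\mathbf{s}_D(z)\mathbf{s}_W(z)^T\|_1$ jointly over amplitude and angle, and profiling out the amplitude via the geometric median (unweighted, since $|z|=1$) yields $\hat{c}_{L_1}(z(\theta))$ and hence exactly (\ref{l1_rank1_doa}). The paper first states the ML estimator for a generic rank-1 model $\boldsymbol{\mu}\mathbf{v}^T$ and then specializes to $\boldsymbol{\mu}=x\,\mathbf{s}_D(z)$, $\mathbf{v}=\mathbf{s}_W(z)$; you instead write the structured likelihood directly and make explicit several assumptions the paper leaves implicit: the modulus-based density, independent noise across overlapping subarrays, and the $\sqrt{DW}$ rescaling.
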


\section{Simulations and Experiments}

In this section, we study experimentally several direction-of-arrival (DoA) estimators under the sensing model described in Section \ref{signal_model}, on both simulated data and real-world UAV measurements. In particular, we carry out comparative performance evaluation of the proposed rank-$1$-Hankel decomposition DoA estimators under the $L_2$-norm and $L_1$-norm and the following leading DoA estimators from the literature: \emph{(i)} The Matrix Pencil method, originally developed for pole estimation in linear systems \cite{hua2002matrix} and later adapted for DoA estimation \cite{adve1996elimination} through Hankel data matrix construction; \emph{(ii)} Hankel-MUSIC \cite{liao2016music} where the noise subspace is drawn from the Singular Value Decomposition of Hankel-matrix data; and \emph{(iii)} Forward-Backward Spatial Smoothing (FB-SS) MUSIC \cite{1164649,9564893}, which estimates Hankel-structured spatially smoothed covariance matrices by averaging subarray covariances. Direct data averaging per sensor to create a single data point $\mathbf{\tilde{r}} \in \mathbb{C}^M$ gives rise to two additional competitive methods: \emph{(iv)} Maximum energy estimation, $\hat{\theta} = \underset{\theta }{\operatorname{argmax}}\left|\mathbf{a}_M(\theta)^H \mathbf{\tilde{r}}\right|$; and \emph{(v)} MUSIC on Toeplitz covariance matrix estimate from $\mathbf{\tilde{r}}$  \cite{degen2017single}, \cite{marple2019digital}.

In all simulations, we follow the signal model in (\ref{signal_model_1_signal}). We consider uniform linear antenna arrays (ULA) with Nyquist inter-element spacing $d = \frac{\lambda}{2}$.
We examine ULAs with $M=4,8,16,32,$ and $128$ elements and $D=M/2$ independent RF processing chains utilized as in Fig. \ref{fig:sliding_array_arch}.
We consider two different noise conditions: First, standard i.i.d. white Gaussian noise across time and element and, second, i.i.d. Gaussian mixture impulsive noise across time and elements. 

We begin with the white Gaussian noise case under which  the noise vector in  (\ref{signal_model_1_signal})  is distributed as
\begin{equation}
\label{awgn}
\mathbf{n} \sim \mathcal{CN}\left(\mathbf{0},\, \sigma^2 \mathbf{I}_{D}\right) 
\end{equation}
where $\sigma^2$ denotes the noise variance per antenna element and $\mathbf{I}_D$ is the $D \times D$ identity matrix. 
For fixed unknown signal amplitude $x \in \mathbb{C}$ across samples
$\mathbf{r}_i$, $i = 0, \ldots, W-1$, in (\ref{sliding_sub}), the signal-to-noise ratio defined as $\text{SNR} \triangleq \frac{|x|^2}{\sigma^2}$dB is controlled by setting $x = \sqrt{10^{SNR / 10} \sigma^2} e^{j \phi}$ where $\phi$ is drawn from the uniform $[0, 2\pi]$ distribution.
We recall that under the assumption of white Gaussian noise the proposed rank-$1$ Hankel-decomposition DoA estimator in Fig. \ref{l2_grid_alg} under the $L_2$-norm is maximum-likelihood optimal. 


Fig. \ref{fig:mae_all_wgn} reports the mean absolute estimation error, averaged over $5000$ independent experiments, of the proposed rank-$1$ Hankel decomposition method under the $L_2$-norm and all other estimators \emph{(i)-(v)} described above, for SNR levels  $-5$dB, $0$dB, $5$dB, and $10$dB. As expected, performance improves monotonically with increasing SNR and number of antenna elements for all methods. Across all configurations considered, the proposed DoA estimator outperforms the competing approaches and the performance gap opens as the number of antennas and/or the SNR increases. Notably, at SNR of $0$dB or better and $M=64$ antennas the average absolute estimation error of the proposed algorithm is less that one-tenth of a degree. At $10$dB SNR and $M=128$, the absolute estimation error is as low as one-hundredth of a degree.

\begin{figure*}[htbp]
    \centering
    \begin{minipage}[t]{0.49\textwidth}
        \centering
        \subfloat[]{
            \includegraphics[width=\textwidth]{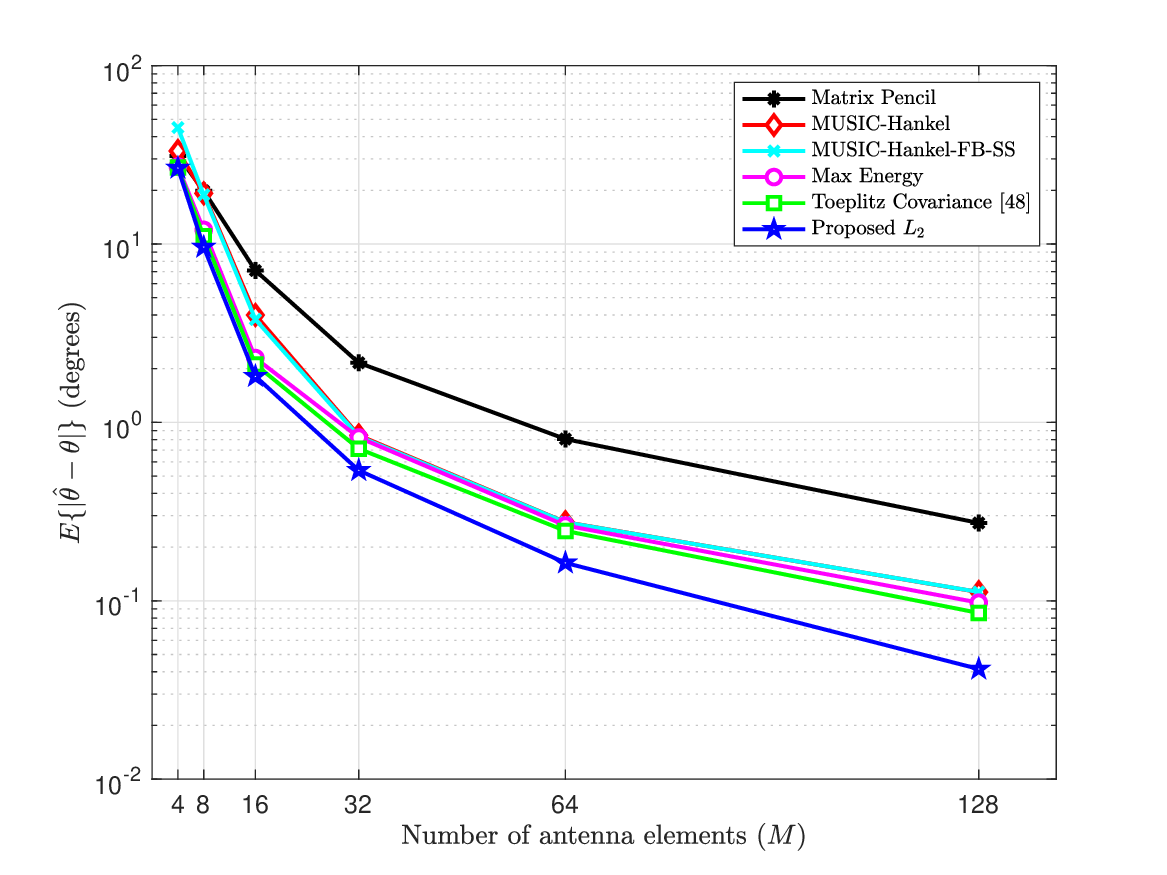}
            \label{fig:mae_snr_-5}
        } \\
        \subfloat[]{
            \includegraphics[width=\textwidth]{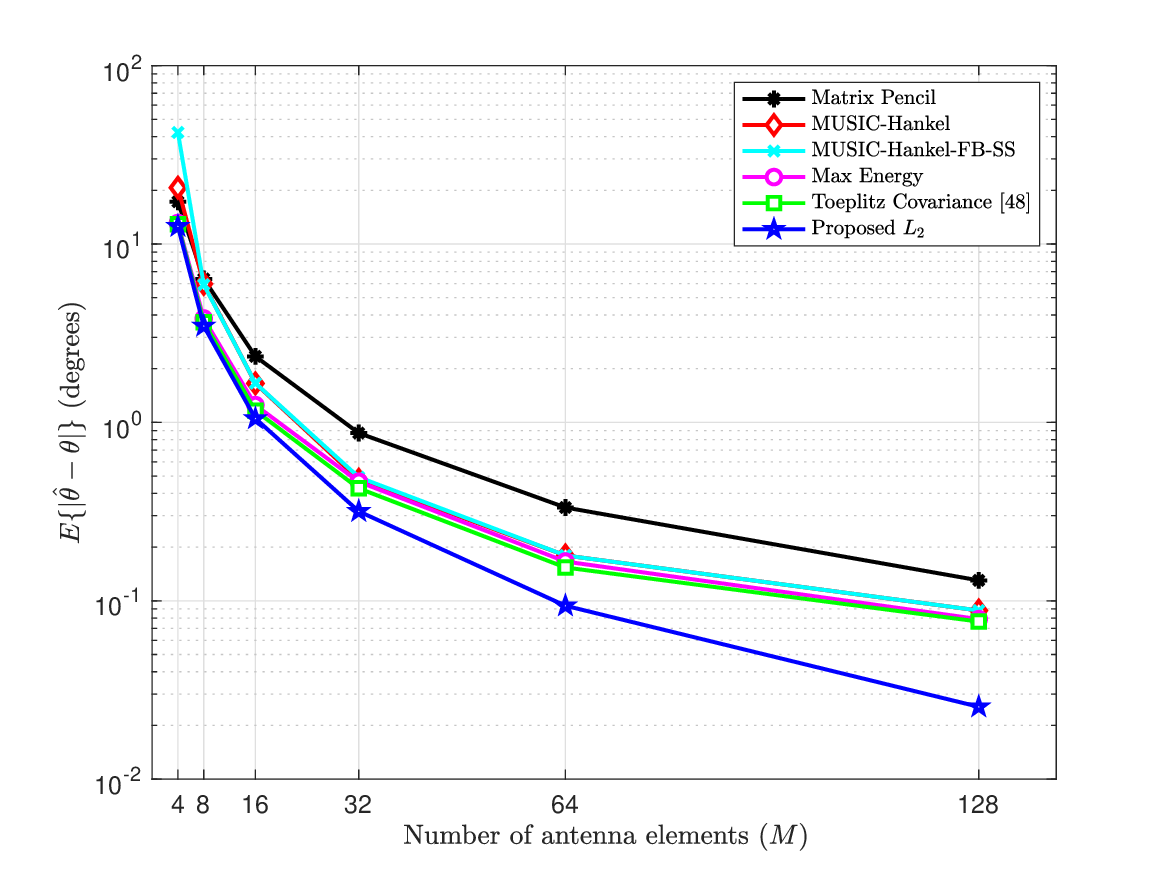}
            \label{fig:mae_snr_0}
        }
    \end{minipage}
    \hfill
    \begin{minipage}[t]{0.49\textwidth}
        \centering
        \subfloat[]{
            \includegraphics[width=\textwidth]{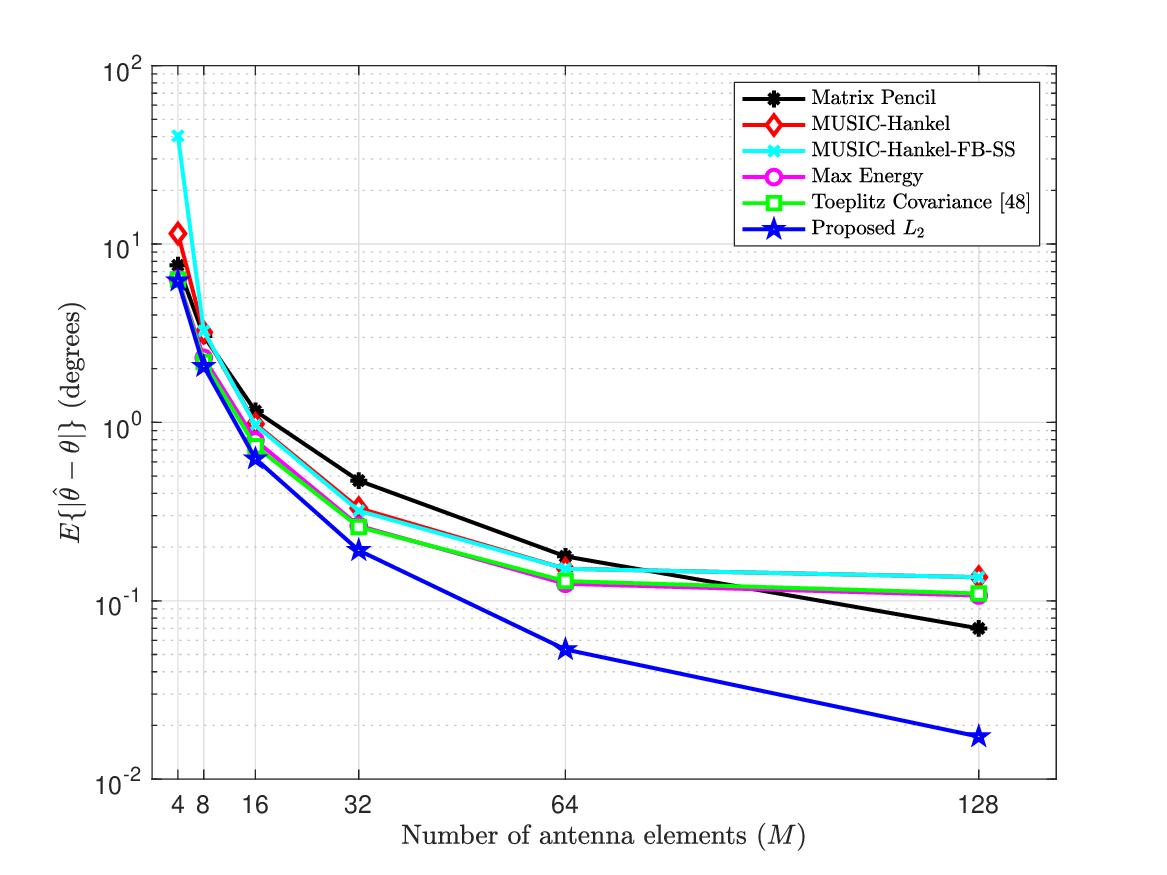}
            \label{fig:mae_snr_5}
        } \\
        \subfloat[]{
            \includegraphics[width=\textwidth]{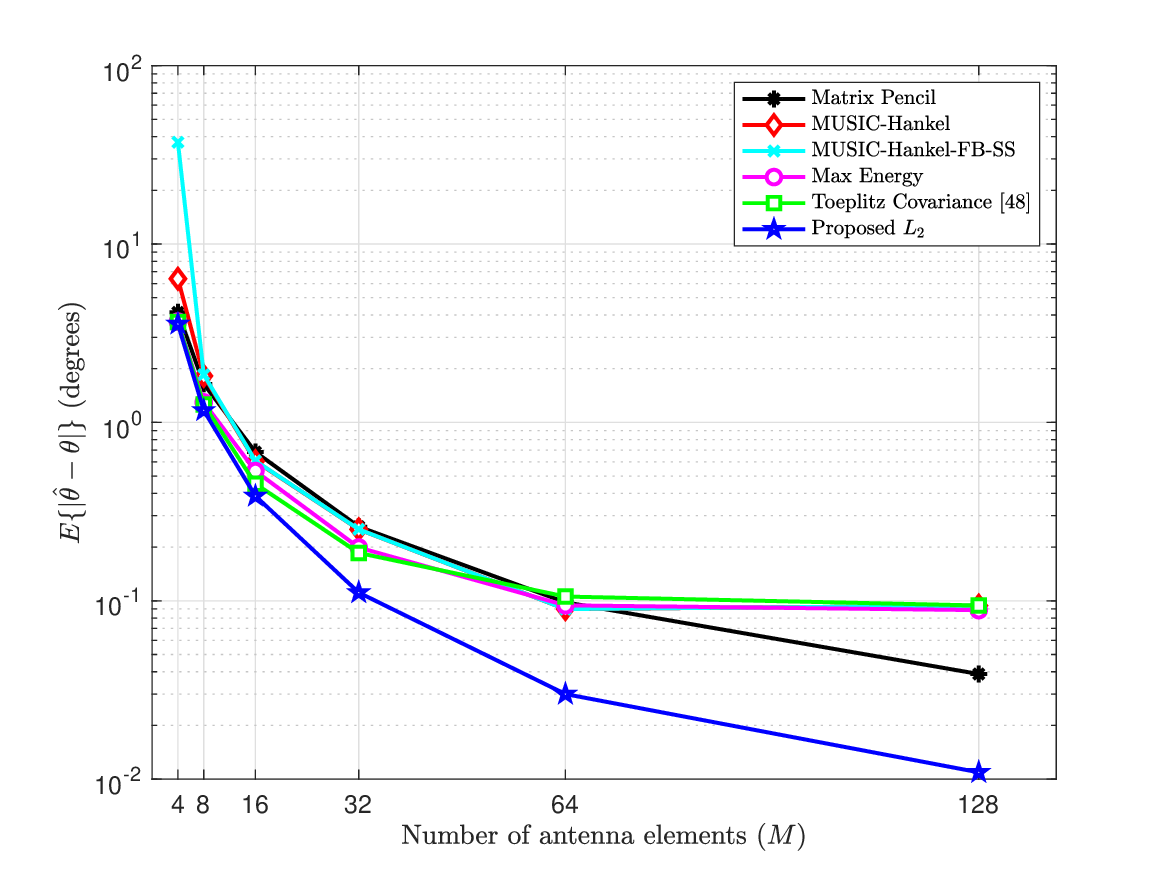}
            \label{fig:mae_snr_10}
        }
    \end{minipage}
    \caption{Mean absolute estimation error versus number of antenna array elements: (a) SNR = -5 dB, (b) SNR = 0 dB, (c) SNR = 5 dB, (d) SNR = 10 dB.}
    \label{fig:mae_all_wgn}
\end{figure*}

Next, we consider a challenging sensing environment where the noise exhibits impulsive behavior modeled as a Bernoulli–Gaussian mixture of the form 
\begin{equation}
\label{impulsive_noise_1}
\begin{split}
   \mathbf{n} \sim(1-p) \, \mathcal{C} \mathcal{N}\left(\mathbf{0}, \sigma_1^2 \mathbf{I}_{D \times D}\right)+p \, \mathcal{C N}\left(\mathbf{0}, \sigma_2^2 \mathbf{I}_{D \times D}\right) 
\end{split}
\end{equation}
where $p \in (0,1)$ is the impulse probability and $\sigma_2^2 \gg \sigma_1^2$. 
For fixed unknown signal amplitude $x \in \mathbb{C}$ across samples
$\mathbf{r}_i$, $i = 0, \ldots, W-1$, in (\ref{sliding_sub}), the signal-to-noise ratio defined now as $\text{SNR} \triangleq \frac{|x|^2}{(1-p)\sigma_1^2 + p\sigma_2^2}$dB
is controlled by setting $x = \sqrt{10^{SNR / 10}\left[(1-p) \sigma_1^2+p \sigma_2^2\right]} e^{j \phi}$ where $\phi$ is drawn from the uniform $[0, 2\pi]$ distribution.
We recall that under the assumption of isotropic i.i.d. Laplace noise the proposed rank-$1$ Hankel-decomposition DoA estimator in Fig. \ref{l1_grid_alg} under the $L_1$-norm is maximum-likelihood optimal. We anticipate, therefore, that the $L_1$-norm Hankel-decomposition estimator excels in the heavy-tailed Bernoulli-Gaussian noise environment under consideration.

Fig. \ref{figBGM:mae_all_0.1} reports the mean absolute estimation error, averaged over $2000$ independent experiments, of the proposed rank-$1$ Hankel decomposition method under the $L_1$-norm and all other estimators \emph{(i)-(v)}, for SNR levels  $-5$dB, $0$dB, $5$dB, and $10$dB. 
The impulse probability is set to $p=0.1$ and the low and high noise variances to $\sigma_1^2 = 1$ and $\sigma_2^2 = 200$.
While performance improves monotonically with increasing impulsive-noise SNR and number of antenna elements for all methods, the proposed DoA estimator outperforms significantly the competing approaches and the performance gap widens as the number of antennas and/or the SNR increases. Notably, at antenna array size $M=128$ and all studied SNR values there is about an order of magnitude improvement in absolute error between the developed estimator and the other methods.

\begin{figure*}[htbp]
    \centering
    \begin{minipage}[t]{0.49\textwidth}
        \centering
        \subfloat[]{
            \includegraphics[width=\textwidth]{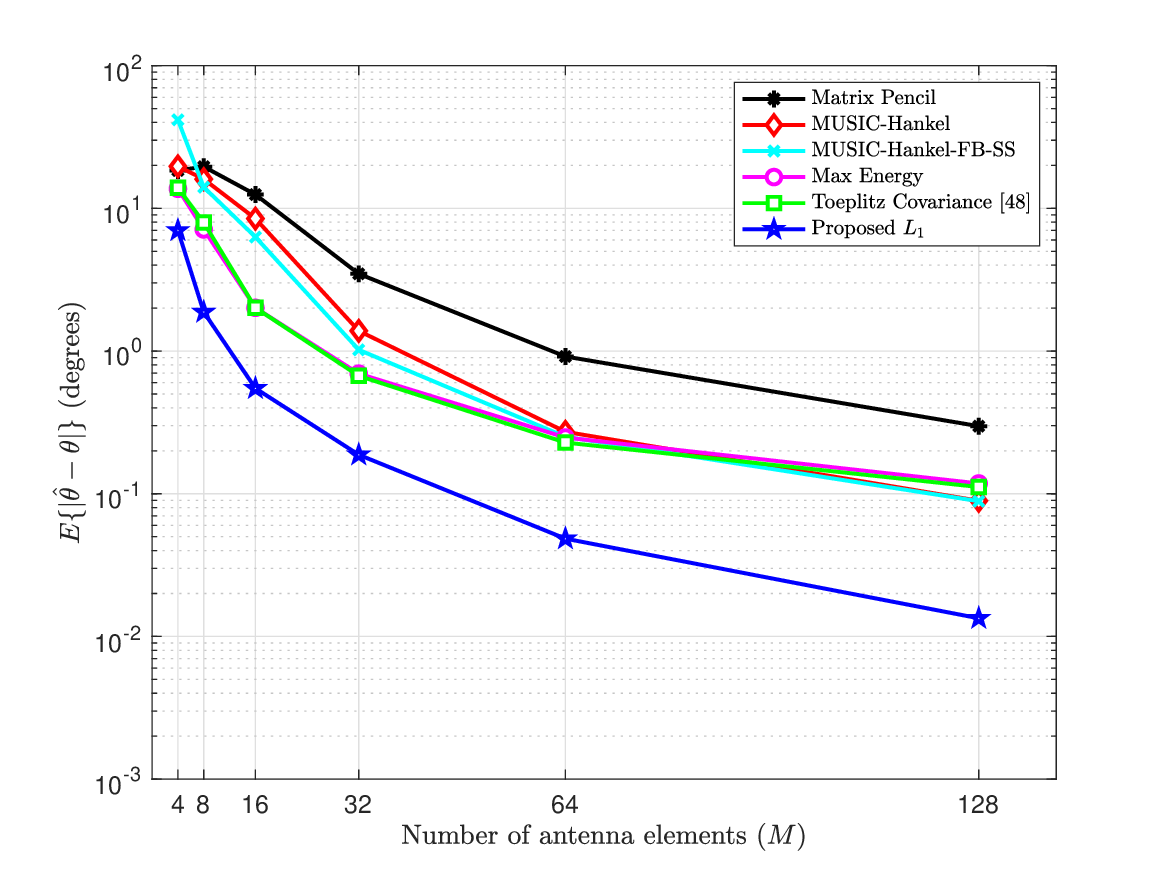}
            \label{figBGM:mae_snr_-5_0.1}
        } \\
        \subfloat[]{
            \includegraphics[width=\textwidth]{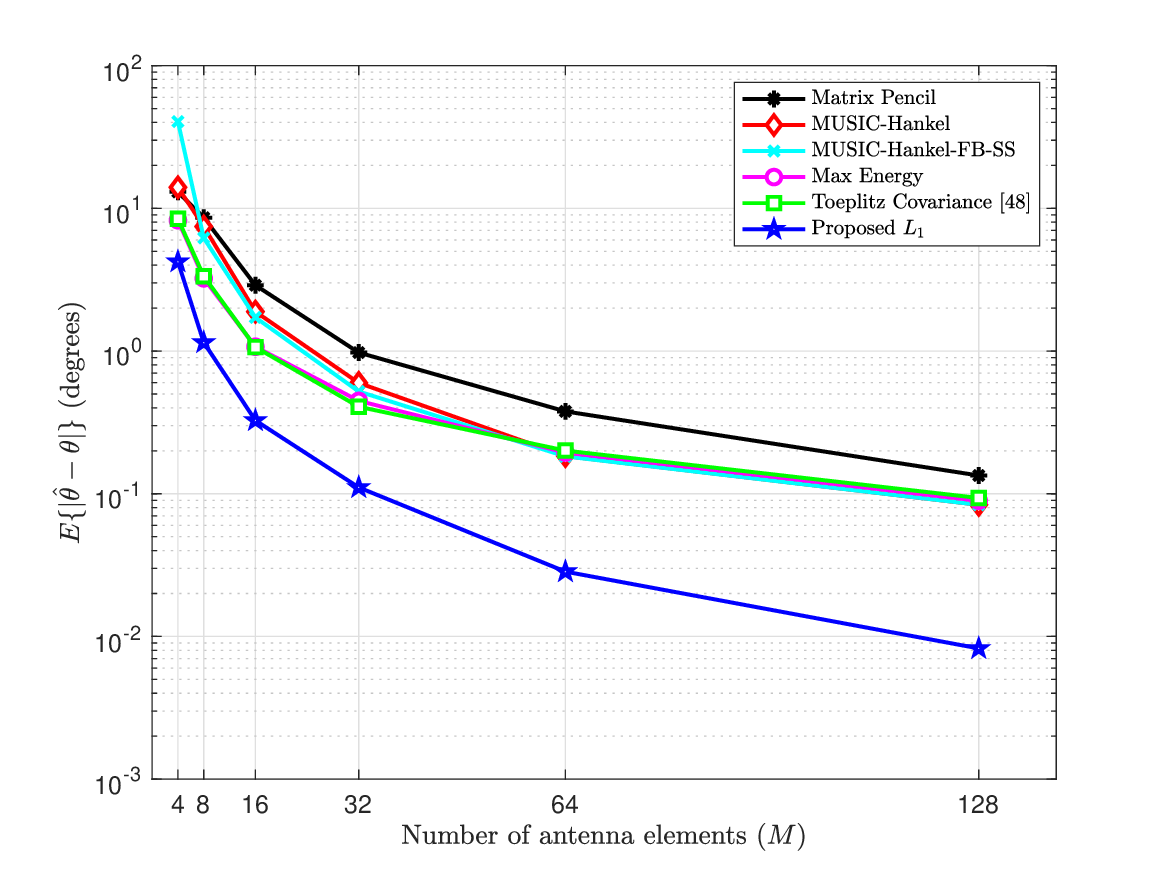}
            \label{figBGM:mae_snr_0_0.1}
        }
    \end{minipage}
    \hfill
    \begin{minipage}[t]{0.49\textwidth}
        \centering
        \subfloat[]{
            \includegraphics[width=\textwidth]{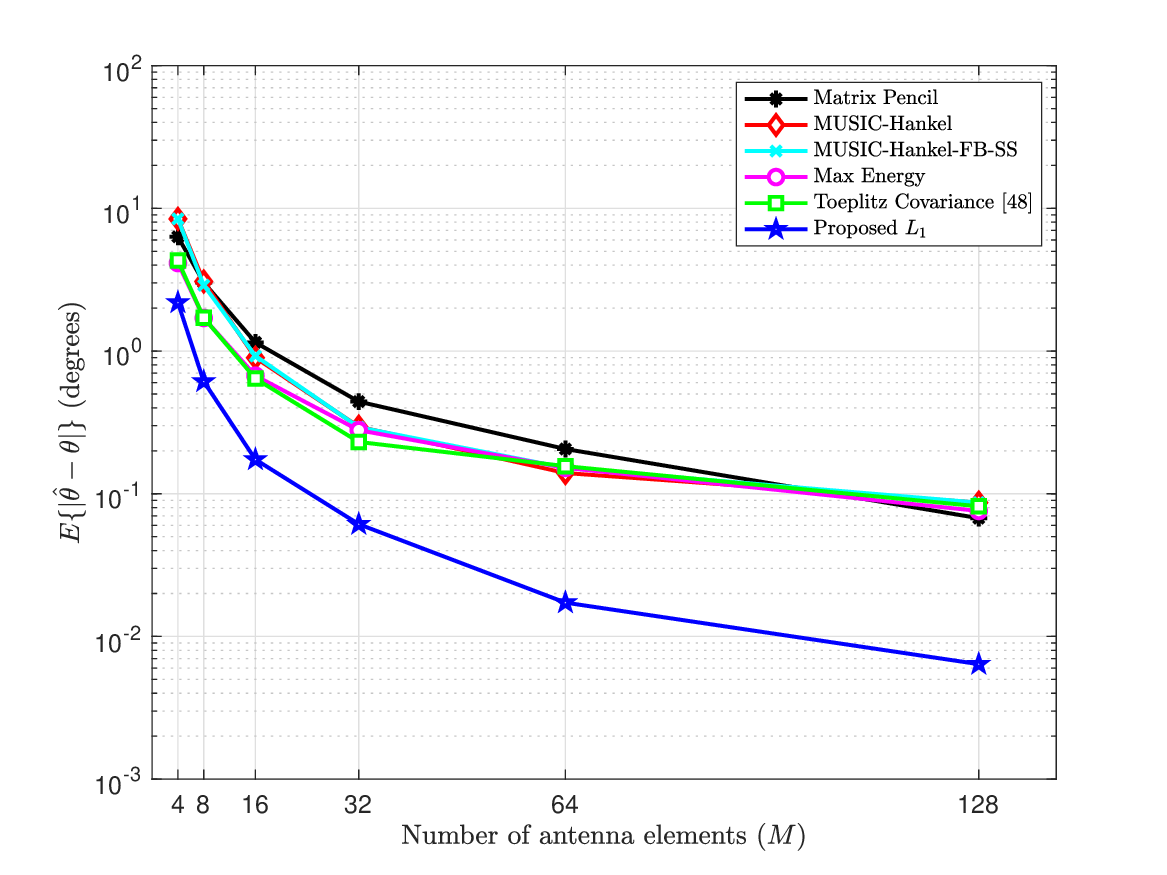}
            \label{figBGM:mae_snr_5_0.1}
        } \\
        \subfloat[]{
            \includegraphics[width=\textwidth]{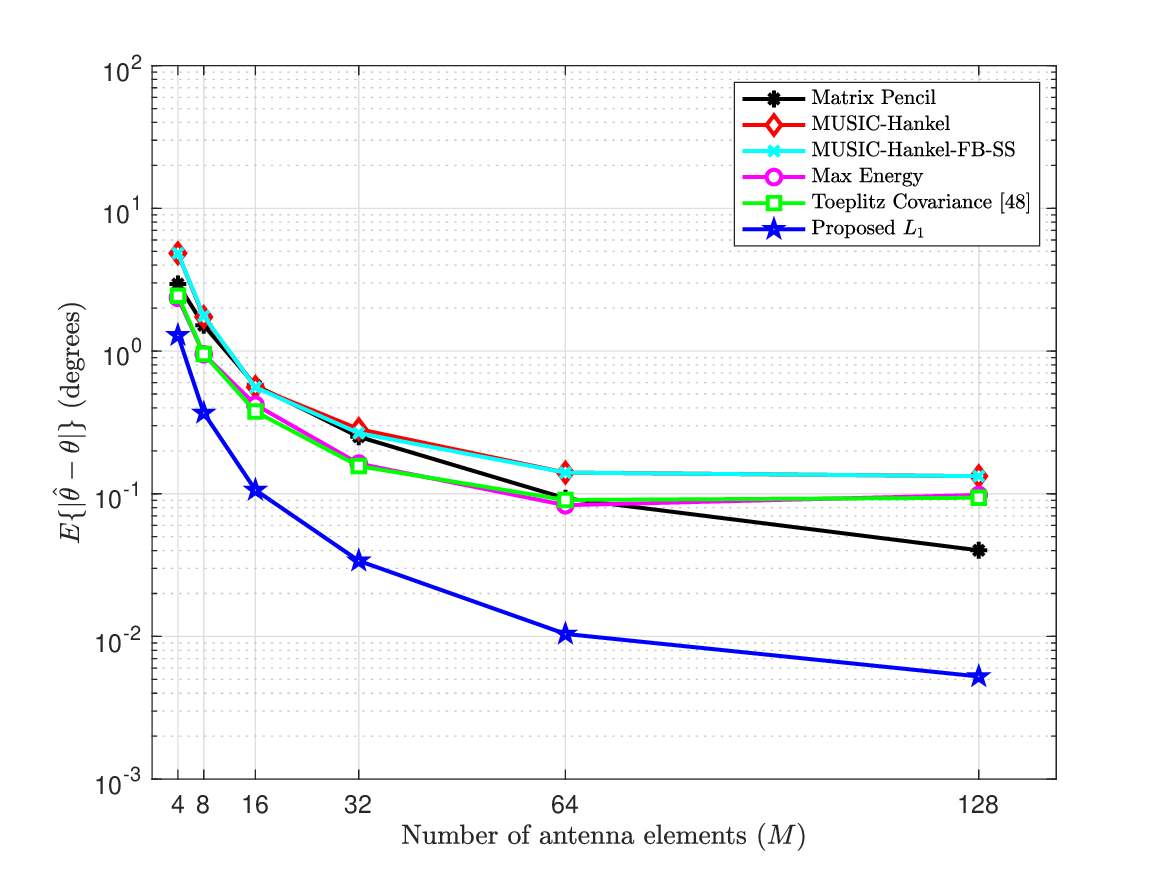}
            \label{figBGM:mae_snr_10_0.1}
        }
    \end{minipage}
    \caption{Mean absolute estimation error versus number of antenna array elements, impulse probability $p=0.1$: (a) SNR = -5 dB, (b) SNR = 0 dB, (c) SNR = 5 dB, (d) SNR = 10 dB.}
    \label{figBGM:mae_all_0.1}
\end{figure*}

Fig. \ref{figBGM:mae_all_0.2} repeats the study of Fig. \ref{figBGM:mae_all_0.1} under higher impulse probability $p=0.2$. The same overall conclusions hold true. The robustness of the developed rank-$1$ Hankel decomposition method under the $L_1$-norm is prevalent. The results underscore the sustained accuracy of the estimator in challenging impulsive-noise environments.

\begin{figure*}[htbp]
    \centering
    \begin{minipage}[t]{0.49\textwidth}
        \centering
        \subfloat[]{
            \includegraphics[width=\textwidth]{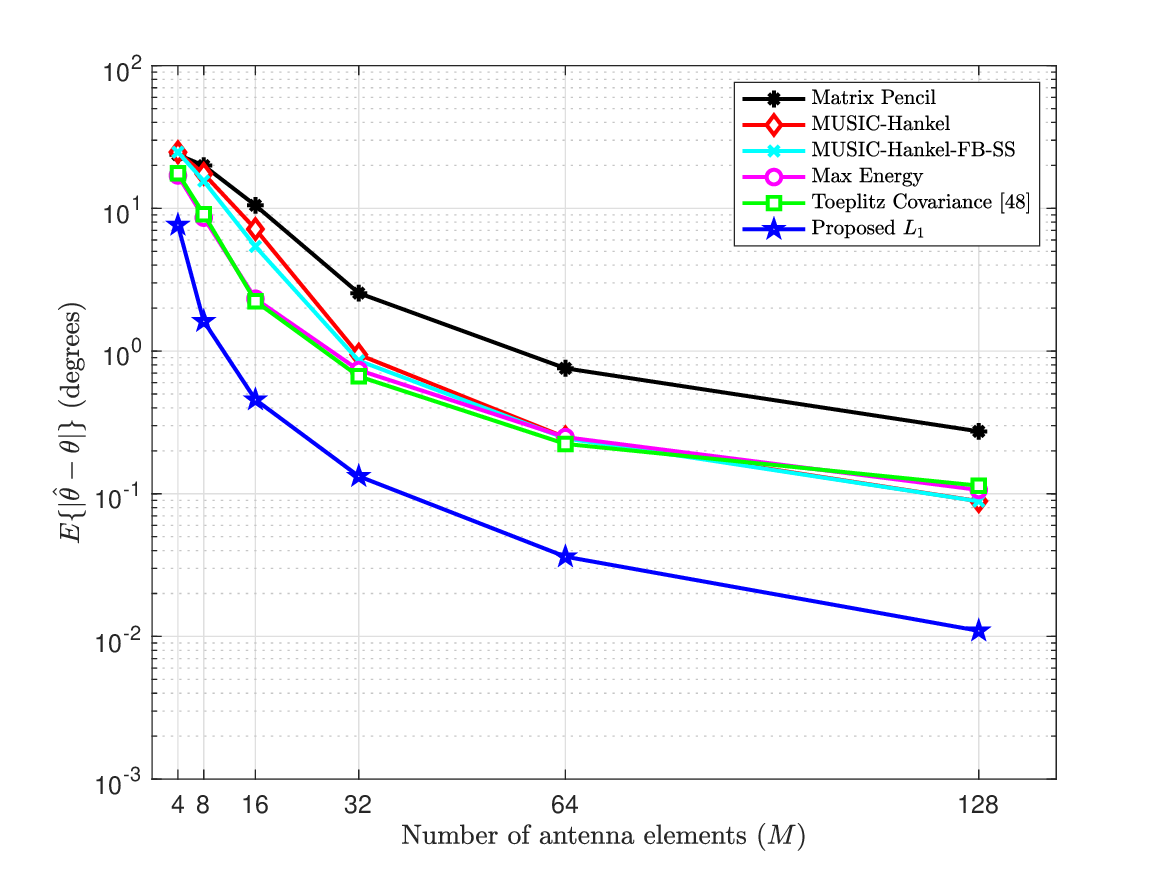}
            \label{figBGM:mae_snr_-5_0.2}
        } \\
        \subfloat[]{
            \includegraphics[width=\textwidth]{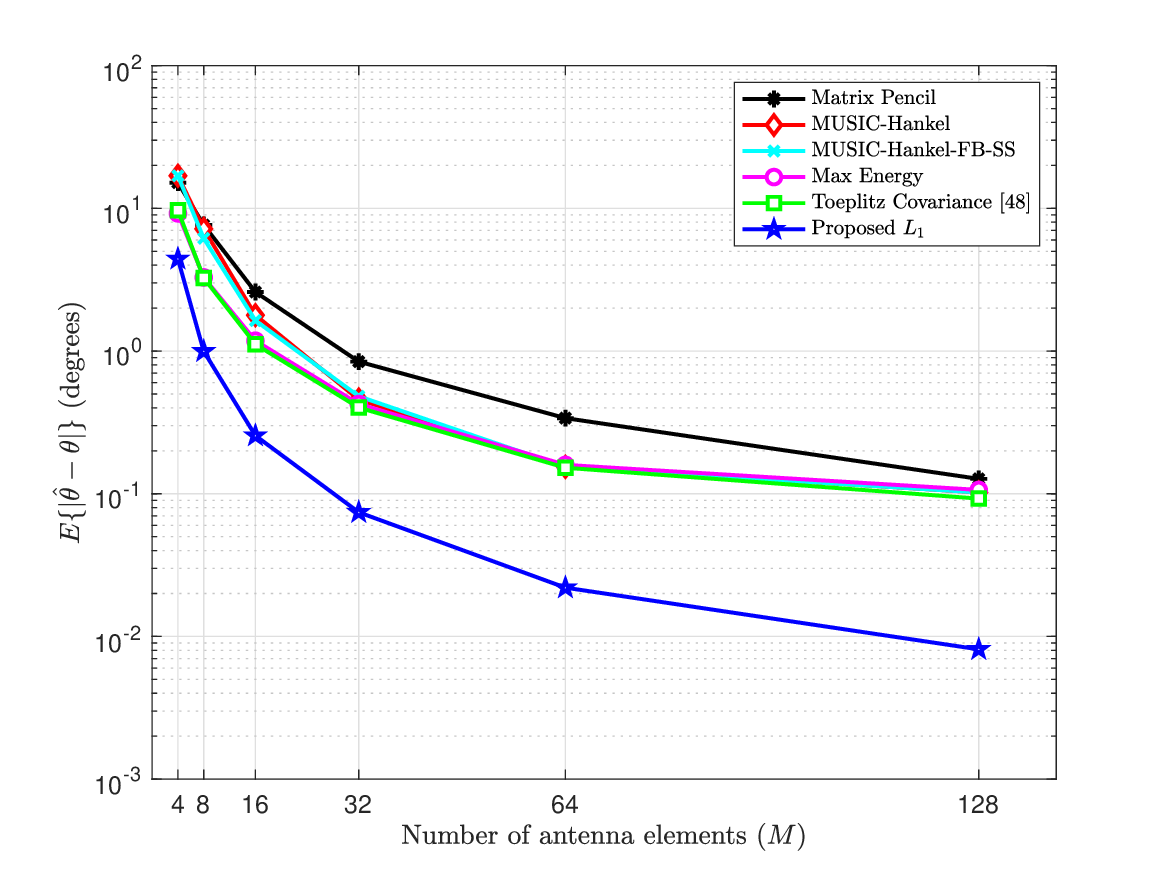}
            \label{figBGM:mae_snr_0_0.2}
        }
    \end{minipage}
    \hfill
    \begin{minipage}[t]{0.49\textwidth}
        \centering
        \subfloat[]{
            \includegraphics[width=\textwidth]{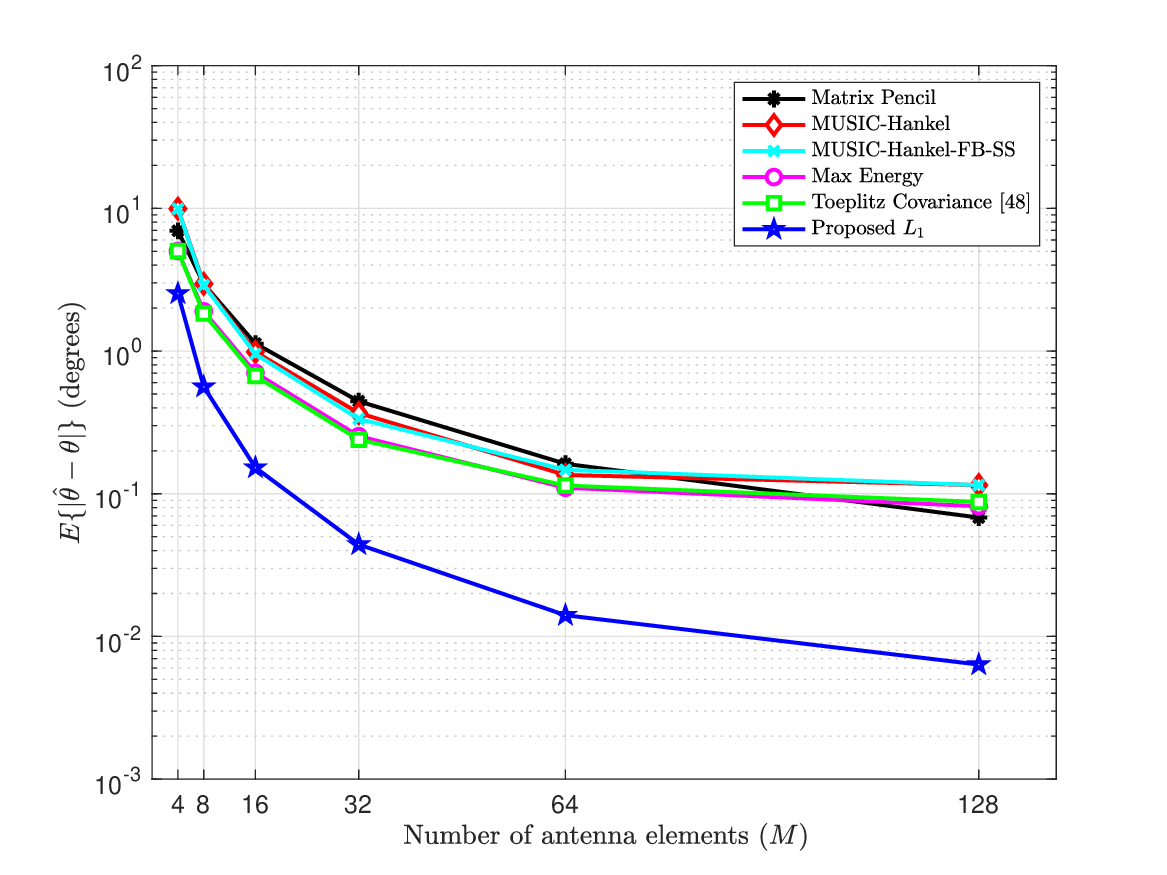}
            \label{figBGM:mae_snr_5_0.2}
        } \\
        \subfloat[]{
            \includegraphics[width=\textwidth]{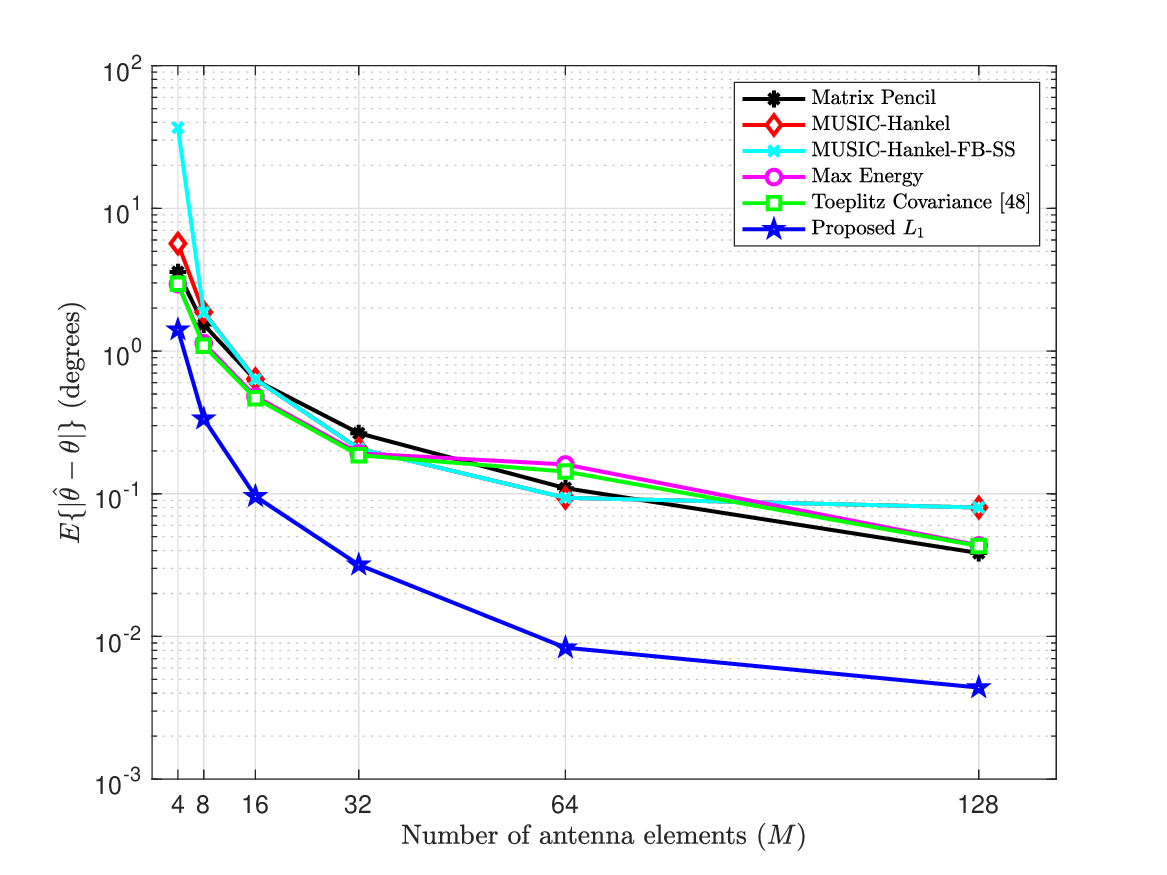}
            \label{figBGM:mae_snr_10_0.2}
        }
    \end{minipage}
    \caption{Mean absolute estimation error versus number of antenna array elements, impulse probability $p=0.2$: (a) SNR = -5 dB, (b) SNR = 0 dB, (c) SNR = 5 dB, (d) SNR = 10 dB.}
    \label{figBGM:mae_all_0.2}
\end{figure*}

Finally, to assess the practical robustness of the proposed $L_1$-norm rank-1 Hankel decomposition–based DoA estimator, we conduct experiments using the publicly available UAV dataset in \cite{10356267}. The dataset consists of measurements collected by a $5 \times 8$ uniform rectangular array (URA) mounted on a hovering drone. Under the sliding-window sensing architecture and $D=4$ available RF processing chains, the URA measurements are processed using overlapping subarrays as illustrated in Fig. \ref{fig:sliding_array_arch_uav}.
Due to imperfections in the dataset, three antenna elements are unavailable, resulting in an irregular effective array geometry. To emulate sensor faults, the samples corresponding to the missing elements are replaced with noise-only measurements with elevated power relative to the healthy sensors. Specifically, for a given a subarray data vector $\mathbf{r}_i, \quad i = 0, \ldots W-1$, the missing antenna array element values are replaced with i.i.d noise samples drawn from $\mathcal{C N}\left(0, \sigma_i^2\right)$ where 
\begin{equation}
\label{faulty_elements}
\sigma_i^2 \triangleq 10^{\alpha / 10} \frac{1}{\left|\Omega_i\right|} \sum_{j \in \Omega_i}\left|r_i[j]\right|^2 ,
\end{equation}
$\Omega_i$ is the set of indices corresponding to the healthy sensor entries of $\mathbf{r}_i$, and $\alpha$ controls the severity of the injected sensor fault in dB, representing the power increase relative to the available healthy sensors within the same subarray.


\begin{figure}[t]
    \centering
    \includegraphics[width=0.8
    \linewidth]{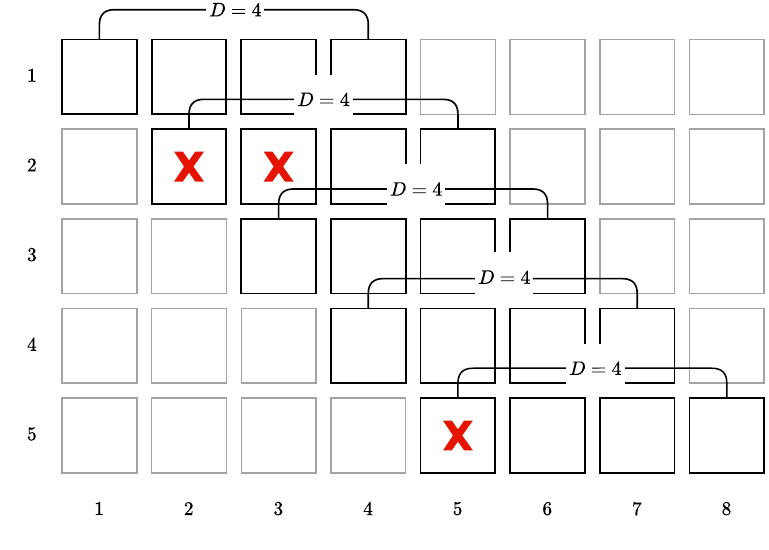}
    \caption{Sliding-window acquisition over a $5 \times 8$ URA with $D=4$ available RF processing chains. `X' markers indicate unavailable antenna element measurements, which lead to an irregular effective array geometry and corrupted sliding-window data \cite{10356267}.}
    \label{fig:sliding_array_arch_uav}
\end{figure}

Fig. \ref{fig:uav_results} reports the mean absolute DoA estimation error over $5000$ independent experiments for all frameworks under consideration for a UAV hovering location with an azimuth angle relative to the base station of $-9.64^{\circ}$ and sensor fault injection parameter $\alpha = 10$dB. The developed robust $L_1$-norm Hankel-decomposition DoA estimator sustains the lowest error among the evaluated methods, highlighting its resilience to imperfect array geometries and noise-only sensor feeds introduced by faulty antenna elements. These results further demonstrate that in real-world deployments where antenna failures of various forms may occur rather frequently, the $L_1$-norm Hankel-decomposition DoA estimator provides a leading option. 

\begin{figure}[t]
    \centering
    \includegraphics[width=\columnwidth]{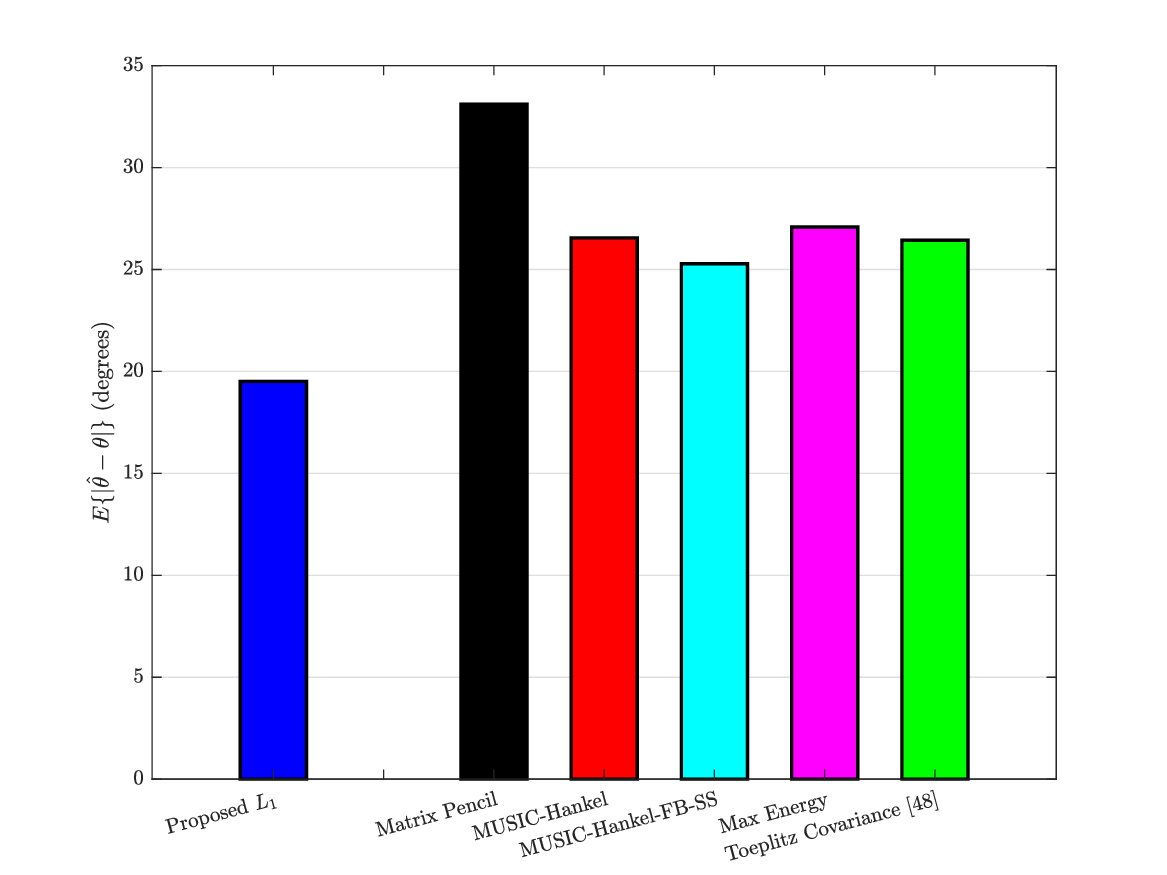}
    \caption{Mean absolute DoA estimation error for UAV hovering location with signal DoA $-9.64^{\circ}$.}
    \label{fig:uav_results}
\end{figure}

\section{Conclusions}
We addressed formally the problem of computing optimal rank-$1$ Hankel-structured (and Toeplitz-structured) decompositions of arbitrary complex matrices under both the $L_2$-norm and the $L_1$-norm error formulation and developed accurate and computationally tractable algorithms for both settings. In modern data acquisition protocols that impose naturally an underline Hankel structure to the sensed signal of interest in arbitrary noise, the developed algebraic algorithms are new prime tools for signal processing.

In this paper, we considered as an illustrative example space-series data analysis and, in particular, the problem of signal direction of arrival (DoA) estimation from few snapshots from large arrays with limited number of RF chains. We established that the $L_2$-norm Hankel-decomposition algorithm provides maximum-likelihood DoA estimates under white Gaussian noise. We established that the $L_1$-norm Hankel-decomposition algorithm provides maximum-likelihood DoA estimates under i.i.d. isotropic Laplace noise. Extensive simulation studies and experiments evaluated the performance of the two DoA estimators against leading alternatives in white Gaussian and heavy-tailed noise showing truly significant performance gains.

Collectively, these results demonstrated that structured rank-$1$ decompositions provide a powerful and principled foundation for a broad class of inference tasks arising in modern machine learning with fast DoA estimation representing a particularly impactful application in sensing environments characterized by non-stationarity, impulsive disturbances, and hardware calibration imperfections.

\appendix
\section*{Proof of Theorem \ref{ml_gaussian}}

\begin{IEEEproof}
Following the developments in (\ref{l2_rank1})--(\ref{c}) we rewrite 
\begin{equation}
\hat{\theta}_{L_2}=\underset{\theta \in\left[-90^{\circ}, 90^{\circ}\right)}{\operatorname{argmin}}\left\|\mathbf{X}-\hat{c}_{L_2}(z(\theta)) \mathbf{s}_D(z(\theta)) \mathbf{s}_W(z(\theta))^T\right\|_2^2
\end{equation} 
as 
\begin{equation}
\label{argmax_l2}
\hat{\theta}_{L_2}=\underset{\theta \in\left[-90^{\circ}, 90^{\circ}\right)}{\operatorname{argmax}}\left| \mathbf{s}_D(z(\theta))^{H} \mathbf{X} \mathbf{s}_W(z(\theta))^{*}\right|^2. 
\end{equation} 

Applying the identity
\begin{equation}
\mathrm{vec}(\mathbf{A}\mathbf{C}\mathbf{B}^T) = (\mathbf{B} \otimes \mathbf{A})\,\mathrm{vec}(\mathbf{C})
\end{equation}
to (\ref{argmax_l2}) with $\mathbf{A} = \mathbf{s}_D(z(\theta))^H$, $\mathbf{C} = \mathbf{X}$, and $\mathbf{B} = \mathbf{s}_W(z(\theta))^{*}$, we obtain
\begin{equation}
\label{vec_identity_exp}
\mathbf{s}_D(z(\theta))^{H} \mathbf{X} \mathbf{s}_W(z(\theta))^{*}
= (\mathbf{s}_W(z(\theta)) \otimes \mathbf{s}_D(z(\theta)))^{H} \mathrm{vec}(\mathbf{X}).
\end{equation}

Defining the vectorized observation 
\begin{equation}
    \mathbf{r}^v \triangleq \mathrm{vec}(\mathbf{X}) \in \mathbb{C}^{DW}
\end{equation}
and the virtual steering vector
\begin{equation}
\mathbf{a}^v_{DW}(z(\theta)) \triangleq \mathbf{s}_W(z(\theta)) \otimes \mathbf{s}_D(z(\theta)) \in \mathbb{C}^{DW}, 
\end{equation}
 we obtain by (\ref{vec_identity_exp})
\begin{equation}
\label{virtual_sample_ml}
\hat{\theta}_{L_2}
=\underset{\theta \in\left[-90^{\circ}, 90^{\circ}\right)}{\operatorname{argmax}}
\left|\mathbf{a}^v_{DW}(\theta)^H\mathbf{r}^v\right|^2.
\end{equation} 

Under the signal model in (\ref{signal_model_1_signal}) and sensing model in (\ref{sliding_sub}), (\ref{agg_meas_matrix}) the aggregate measurement matrix satisfies
\begin{equation}
\mathbf{X} = x\, \mathbf{s}_D(z(\theta_0)) \mathbf{s}_W(z(\theta_0))^T + \mathbf{N}
\end{equation}
where $\mathbf{N}$ consists of i.i.d.\ complex Gaussian noise samples. Vectorizing yields
\begin{equation}
\mathbf{r}^v = x\, \mathbf{a}^v_{DW}(\theta_0) + \mathbf{n}^v,
\end{equation}
with $\mathbf{n}^v \sim \mathcal{CN}(\mathbf{0}, \sigma^2 \mathbf{I})$.
For this model, the deterministic maximum-likelihood estimator of $\theta$ is
\begin{equation}
\hat{\theta}_{\mathrm{ML}}
= \underset{\theta \in [-90^\circ,90^\circ)}{\arg\max}
\left| \mathbf{a}^v_{DW}(\theta)^H \mathbf{r}^v \right|^2.
\end{equation}
Comparing with (\ref{virtual_sample_ml}), we conclude that
$$
\hat{\theta}_{L_2} = \hat{\theta}_{\mathrm{ML}},
$$
which establishes that the proposed $L_2$-norm rank-$1$ Hankel estimator is maximum-likelihood optimal under i.i.d.\ complex Gaussian noise.
\end{IEEEproof}

\section*{Proof of Theorem \ref{ml_laplace}}
\begin{IEEEproof}
For observations of the form 
\begin{equation}
\mathbf{x}_i = v_i\boldsymbol{\mu} + \mathbf{n}_i, \quad i = 0,\ldots,W-1,
\end{equation}
collected in the presence of i.i.d. complex Laplace noise arranged in $\mathbf{X} \in \mathbb{C}^{D \times W} $, the maximum-likelihood estimator is given by 
\begin{equation}
(\hat{\boldsymbol{\mu}},\hat{\mathbf{v}})_{\mathrm{ML}} =\underset{\boldsymbol{\mu},\,\mathbf{v}}{\operatorname{argmin}}\|\mathbf{X} - \boldsymbol{\mu}\mathbf{v}^T\|_1.
\end{equation}
Considering the sliding-subarray acquisition model in (\ref{sliding_sub}) and factoring out $z(\theta)^i$ from the $i$-th subarray we rewrite
\begin{equation}
\mathbf{r}_i = z(\theta)^i \, x\, \mathbf{s}_D(z(\theta)) + \mathbf{n}_i, \qquad i=0,\ldots,W-1,
\end{equation}
so that $v_i = z(\theta)^i$ and $\boldsymbol{\mu} = x\,\mathbf{s}_D(z(\theta))$ is fixed across subarrays. Under this parameterization, the aggregate measurement matrix from (\ref{agg_meas_matrix}) is of the form $\mathbf{X} = \boldsymbol{\mu}\mathbf{v}^T + \mathbf{N}$ with $\mathbf{v} = \mathbf{s}_W(z(\theta))$. The joint maximum-likelihood estimate becomes
\begin{equation}
(\hat{x}, \hat{\theta})_{\mathrm{ML}} = \underset{x \in \mathbb{C}, \, \theta \in\left[-90^{\circ}, 90^{\circ}\right)}{\operatorname{argmin}} \left\|\mathbf{X} - x\,\mathbf{s}_D(z(\theta))\,\mathbf{s}_W(z(\theta))^T\right\|_1.
\end{equation}
Finally, for each fixed $\theta$, optimization over $x$ yields $\hat{c}_{L_1}(z(\theta))$ as defined in (\ref{c_l1}), which gives exactly the estimator in (\ref{l1_rank1_doa}),
\begin{equation}
\hat{\theta}_{L_1} =\underset{\theta \in\left[-90^{\circ}, 90^{\circ}\right)}{\operatorname{argmin}}\left\|\mathbf{X}-\hat{c}_{L_1}(z(\theta)) \mathbf{s}_D(z(\theta)) \mathbf{s}_W(z(\theta))^T\right\|_1.
\end{equation} 
\end{IEEEproof}

\bibliographystyle{IEEEtran}
\bibliography{IEEEabrv,bibliography}

\end{document}